\documentclass{article}

\PassOptionsToPackage{numbers,compress}{natbib}
\usepackage{dblfloatfix}
\usepackage[preprint]{neurips_2026}

\usepackage[utf8]{inputenc} 
\usepackage[T1]{fontenc}    
\usepackage{hyperref}       
\usepackage{url}            
\usepackage{booktabs}       
\usepackage{amsfonts}       
\usepackage{amssymb}        
\usepackage{nicefrac}       
\usepackage{microtype}      
\usepackage{xcolor}         
\usepackage{amsmath}
\usepackage{algorithm}
\usepackage{algpseudocode}
\usepackage{tikz}
\usepackage{graphicx}
\usetikzlibrary{positioning,arrows.meta,shapes.geometric,fit}
\usepackage{amsthm}
\newtheorem{theorem}{Theorem}[section]
\newtheorem{lemma}[theorem]{Lemma}

\theoremstyle{definition}

\theoremstyle{remark}

\title{LATTICE: Graph Self-Supervised Learning for Multimodal Spatial Omics Integration}

\author{%
  Jagan Mohan Reddy Dwarampudi \\
  University of Houston \\
  \texttt{jdwaramp@cougarnet.uh.edu} \\
  \And
  Veena Kochat \\
  MD Anderson Cancer Center \\
  \texttt{vkochat@mdanderson.org} \\
  \And
  Suresh Satpati \\
  MD Anderson Cancer Center \\
  \texttt{ssatpati@mdanderson.org} \\
  \And
  Hien Van Nguyen \\
  University of Houston \\
  \texttt{hvnguy35@central.uh.edu} \\
  \And
  Kunal Rai \\
  MD Anderson Cancer Center \\
  \texttt{krai@mdanderson.org} \\
  \And
  Tania Banerjee \\
  University of Houston \\
  \texttt{tbanerje@central.uh.edu} \\
}

\begin{document}

\setcounter{topnumber}{8}
\setcounter{bottomnumber}{8}
\setcounter{totalnumber}{18}
\renewcommand{\topfraction}{0.90}
\renewcommand{\bottomfraction}{0.85}
\renewcommand{\textfraction}{0.08}
\renewcommand{\floatpagefraction}{0.72}
\renewcommand{\dbltopfraction}{0.90}
\renewcommand{\dblfloatpagefraction}{0.72}

\maketitle

\begin{abstract}
  Spatially resolved omics studies increasingly combine transcriptomic and epigenomic assays, yet downstream analysis is often still performed using single-modality pipelines. We present LATTICE (Latent Alignment of Tissue-level and Transcriptomic Information for Cross-modal Embedding), a graph-based self-supervised framework that learns spot-level representations from harmonized multimodal features. LATTICE integrates five aligned modality blocks per Visium spot: Visium RNA, scMultiome RNA, scMultiome ATAC, spatial ATAC, and spatial CUT\&Tag. These modalities capture spatial transcriptomic measurements, single-cell inferred regulatory activity, and in situ chromatin and histone states within a unified lattice representation. LATTICE constructs a spatial neighborhood graph and trains a TransformerConv encoder using masked reconstruction, cross-modal alignment, and spatial smoothness objectives. On a private 11-sample melanoma cohort from an anonymized clinical collaborator comprising 54{,}912 total spots, LATTICE demonstrated stable optimization behavior, reproducible embeddings across analysis seeds, and complete multimodal integration across all samples. Adding scMultiome RNA to Visium RNA alone substantially improved concordance with Space Ranger clusters across 11 runs (adjusted Rand index [ARI] +0.157, normalized mutual information [NMI] +0.143, and spatial contiguity +0.174). Additional modalities further improved spatial contiguity and multimodal utility score (MUS), although they sometimes reduced agreement with RNA-derived reference labels, likely because the learned embeddings captured chromatin and regulatory structure beyond transcriptomic similarity alone. These results position LATTICE as a practical and empirically grounded framework for multimodal spatial omics integration, while also highlighting the need for stronger supervision and broader external benchmarking.
\end{abstract}

\section{Introduction}
\label{sec:introduction}

Spatially resolved omics technologies have transformed our ability to study tissue organization by measuring molecular profiles while preserving spatial context. Platforms such as spatial transcriptomics~\cite{staahl2016visualization} enable genome-wide gene expression across sections, while spatial epigenomic assays and single-cell multiome profiles add accessibility, histone, and cell-state-resolved chromatin readouts. In integrative cohorts, those channels are not redundant. Visium RNA captures broad transcriptional organization. Mapped scMultiome provides higher-resolution transcriptional signals and ATAC-derived gene activities mapped onto the same spots. In situ spatial ATAC and spatial CUT\&Tag~\cite{kaya2019cut} add section-level chromatin and histone-modification context that need not follow RNA-defined domains. An expression-only view can therefore conflate cell state, microenvironment, and regulatory state. Integrating these sources offers spatially grounded views of both transcriptional and regulatory organization.

Despite rapid progress, computational integration of multimodal spatial data remains challenging. Existing approaches often treat one aspect of the problem in isolation. GraphST~\cite{long2023spatially} and related models use graph neural networks and self-supervision on expression and spatial proximity. Other work maps single-cell profiles to locations or aligns modalities through shared latent spaces, but typically does not learn a unified spot-level representation from a fully aligned multimodal feature tensor.

A key limitation is the lack of frameworks that simultaneously capture spatial structure and cross-modal consistency in multimodal spatial datasets. In practice, such datasets are often assembled by combining spatial transcriptomics with projected single-cell modalities or spatial epigenomic measurements. However, downstream analysis is typically performed using methods designed for single-modality data, limiting the ability to fully exploit complementary signals across modalities. This motivates the need for representation learning approaches that operate directly on multimodal spatial feature graphs.

Figure~\ref{fig:lattice_overview} summarizes the overall LATTICE workflow and multimodal integration strategy.
Our evaluation uses harmonized spot-level tensors generated from two upstream components that we reference throughout the paper. ReCAST is an internal engineering pipeline that standardizes inputs, performs multimodal harmonization, and applies sample-level quality control; implementation and export details are summarized in Appendix~\ref{sec:appendix_recast}. SARSIM (Spatially Anchored Regulatory State Inference in Melanoma)~\cite{dwarampudi2026spatially} is a framework for spatially anchored regulatory inference that integrates paired Visium and single-cell multiome data, learns a soft cell-to-spot mapping, and projects accessibility and motif activity into tissue space. We additionally reuse its overlap-gene constraints and clustering metadata during evaluation. Section~\ref{sec:exp_setup} states how their outputs become the five modality blocks fed to LATTICE.

\begin{figure*}[!tbp]
\footnotesize
\centering
\includegraphics[width=\columnwidth, trim=49cm 57cm 42cm 52cm]{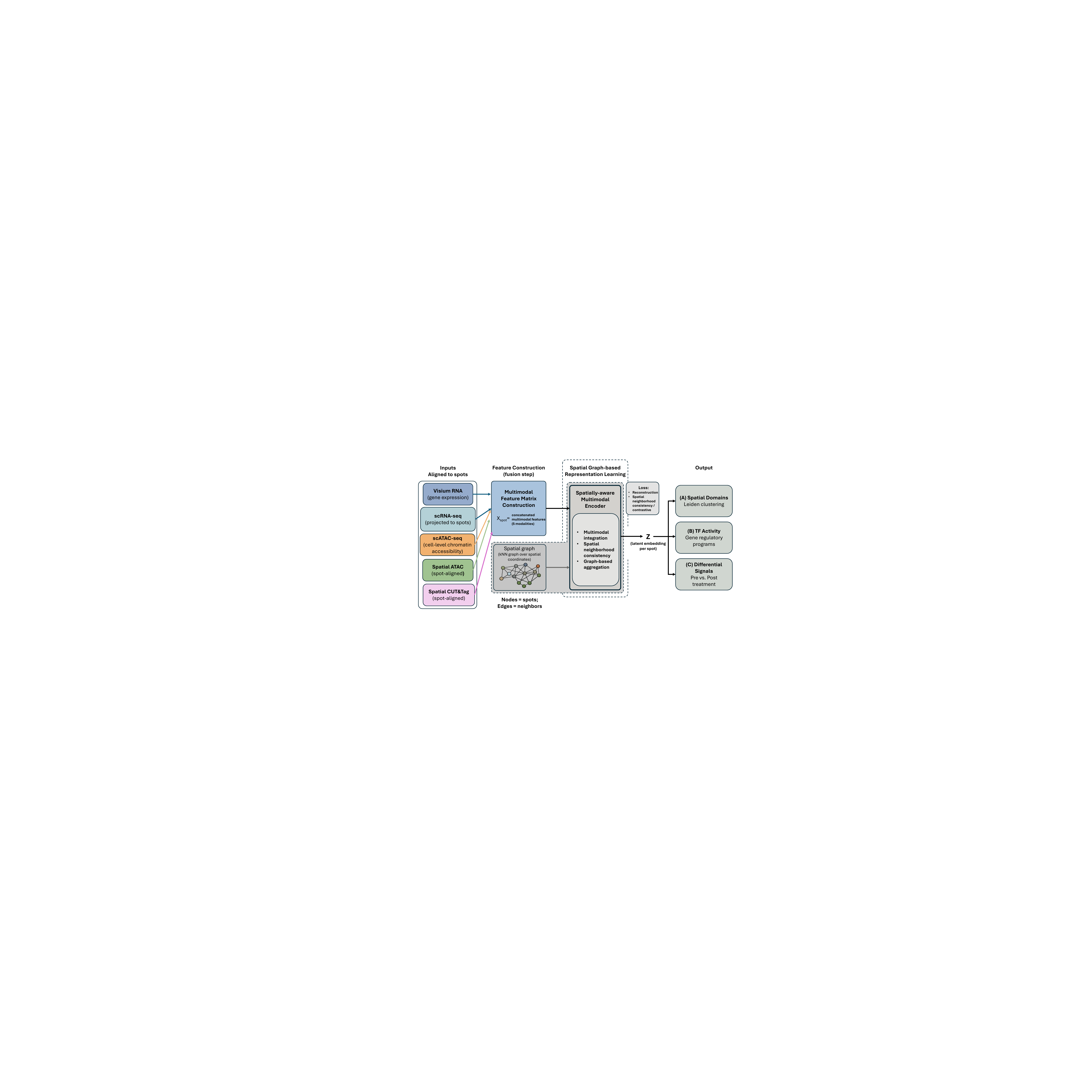}
\caption{
Overview of LATTICE. Multimodal spatial features combining transcriptomic and epigenomic signals aligned at Visium spot resolution, are fused into a unified feature matrix and organized using a spatial k-nearest neighbor graph. A graph-based multimodal encoder learns spot-level embeddings through masked reconstruction, cross-modal alignment, and spatial smoothness objectives. The resulting embeddings capture both molecular and spatial structure and enable downstream tasks including spatial domain identification, regulatory program analysis, and pre/post-treatment differential characterization.
}
\label{fig:lattice_overview}
\end{figure*}

In this work, we present LATTICE, a graph-based self-supervised learning framework for multimodal spatial omics integration. LATTICE takes as input spatially organized multimodal features that combine transcriptomic and epigenomic signals, and constructs a graph that encodes spatial proximity and feature similarity. We then learn latent representations using a graph neural network trained with a combination of self-supervised objectives. Specifically, LATTICE integrates (i) masked feature reconstruction to preserve modality-specific information, (ii) cross-modal alignment to enforce consistency across modalities, and (iii) spatial regularization to enforce local coherence in tissue structure.

We evaluate LATTICE on a private melanoma multimodal cohort from an anonymized clinical collaborator and report both per-sample and joint analyses. We define a cumulative modality ladder referred to throughout the paper as M1–M5:
\begin{itemize}
\item \textbf{M1:} Visium RNA only.
\item \textbf{M2:} M1 plus projected scMultiome RNA.
\item \textbf{M3:} M2 plus projected scMultiome ATAC gene scores.
\item \textbf{M4:} M3 plus spatial ATAC.
\item \textbf{M5:} M4 plus spatial CUT\&Tag (full input tensor).
\end{itemize}
Levels M2 to M3 align with the SARSIM-scope inputs, and levels M4 to M5 add ReCAST-derived in situ epigenomics (Section~\ref{sec:exp_main}). LATTICE is modular with respect to multimodal feature construction, which makes it compatible with a range of upstream integration or mapping strategies.

In summary, our contributions are as follows:
\begin{enumerate}
\item We introduce LATTICE, a graph self-supervised learning framework for learning representations from multimodal spatial omics data.
\item We propose a combination of masked reconstruction, cross-modal alignment losses, and spatial regularization for multimodal spatial representation learning.
\item We provide an end-to-end implementation that harmonizes five modality blocks at spot resolution and exports reproducible integration/training artifacts.
\item We quantify modality-ladder behavior on 11 successful cohort runs, identifying both robust gains and challenging integration regimes that require additional methodological work.
\end{enumerate}
These results highlight the potential of graph-based self-supervised learning as a general paradigm for multimodal spatial omics analysis.

\section{Related Work}

\paragraph{Spatial omics, multimodal integration, and graph self-supervision}
Spatial transcriptomics methods increasingly use graphs and self-supervision. GraphST~\cite{long2023spatially}, STAGATE~\cite{dong2022deciphering}, and SpaGCN~\cite{hu2021spagcn} combine expression with spatial neighborhoods for domains and clustering, but typically emphasize RNA with limited multimodal side information. Separately, single-cell spatial integration methods (SIMO~\cite{yang2025spatial}, SSpMosaic~\cite{zhang2026robust}, MaxFuse~\cite{zhu2024maxfuse}) match or align dissociated cells to tissue, yielding maps or fused views rather than a unified encoder over spot-aligned multimodal tensors. SARSIM~\cite{dwarampudi2026spatially} extends this line by integrating Visium with scMultiome to learn spatially coherent cell-to-spot mappings and project accessibility and motif activity into tissue space for domain-level regulatory interpretation. Graph self-supervised learning is well developed for attributed graphs but is still used predominantly in single- or weakly multimodal settings. Multimodal spatial analysis therefore needs objectives that couple spatial neighborhoods with heterogeneous spot features, not only cross-dataset alignment.

\paragraph{Positioning of LATTICE}
LATTICE targets representation learning on a single spatial graph whose node attributes concatenate all harmonized assay blocks at each location. Masked reconstruction preserves per-modality signal, cross-modal alignment ties views of the same spot, and spatial regularization enforces local coherence. Together these components yield one embedding space for downstream spatial and regulatory analysis without requiring a separate alignment stage. Relative to the modality ladder introduced in Section~\ref{sec:introduction}, M2 through M3 correspond to the Visium plus projected scMultiome slice of the upstream stack, which is the assay pairing SARSIM is built around, and M4 through M5 add ReCAST-derived in situ chromatin channels on the same spot lattice (Section~\ref{sec:exp_setup}).

\section{Proposed Method: LATTICE}
\label{sec:method}

\subsection{Multimodal inputs and objective overview}

We propose LATTICE, a graph-based self-supervised learning framework for learning representations from multimodal spatial omics data. Given spatially resolved measurements augmented with multiple molecular modalities, LATTICE learns low-dimensional embeddings that jointly capture spatial structure and cross-modal relationships. Let $N$ denote the number of spatial locations (e.g., spots), and let
\begin{equation}
X = [X^{(1)}, X^{(2)}, \dots, X^{(B)}] \in \mathbb{R}^{N \times D}
\end{equation}
be the concatenated multimodal feature matrix, where $X^{(b)} \in \mathbb{R}^{N \times d_b}$ corresponds to modality block $b$, $d_b$ is the feature dimension of block $b$, and $D = \sum_{b=1}^{B} d_b$ is the total input feature dimension. In our cohort $B=5$, matching levels M1 to M5 in the modality ladder. Visium and projected scMultiome contribute gene-expression and ATAC-derived gene-score features, whereas spatial ATAC and spatial CUT\&Tag~\cite{kaya2019cut} provide in situ chromatin and histone measurements on the same spot lattice. We treat these blocks as complementary views of each location for reconstruction and cross-modal alignment (block listing in Section~\ref{sec:exp_setup}). LATTICE then learns node embeddings $Z \in \mathbb{R}^{N \times d}$, where $d$ is the embedding dimension, using three self-supervised objectives: masked reconstruction $\mathcal{L}_{\mathrm{rec}}$ (Eq.~\ref{eq:lrec}), cross-modal alignment $\mathcal{L}_{\mathrm{align}}$ (Eq.~\ref{eq:lalign}), and spatial regularization $\mathcal{L}_{\mathrm{spatial}}$ (Eq.~\ref{eq:lspatial}). Their weighted sum is defined in Eq.~\ref{eq:ltotal}, and implementation details appear in Section~\ref{sec:implementation}.

\subsection{Spatial graph construction}

Each node $i \in V$ is a Visium spot in the concatenated multimodal feature space, with spatial coordinate $s_i \in \mathbb{R}^2$ and multimodal feature vector $x_i \in \mathbb{R}^D$. We define edges by spatial proximity. For each node $i$, we define its neighborhood $\mathcal{N}(i)$ as the set of its $k$-nearest neighbors (kNN) based on Euclidean distance:
\begin{equation}
\mathcal{N}(i) = \text{kNN}(s_i)
\end{equation}
Edges are added between $i$ and all $j \in \mathcal{N}(i)$. Optionally, edge weights are defined using a Gaussian kernel:
\begin{equation}
w_{ij} = \exp\left(-\frac{\|s_i - s_j\|^2}{2\sigma^2}\right)
\end{equation}

\subsection{Graph encoder and self-supervised objective}

We employ a graph neural network to learn node embeddings from the multimodal feature graph. Given input features $X$ and graph $G$, the encoder produces embeddings $Z = f_\theta(X, G)$. The implementation stacks TransformerConv message-passing layers with modality-specific input adapters and LayerNorm using PyTorch~\cite{paszke2019pytorch} and PyTorch Geometric~\cite{fey2019fast}. Full architectural and optimization hyperparameters appear in Appendix~\ref{sec:implementation}. To preserve modality-specific information, we adopt a masked reconstruction objective. Let $\Omega \in \{0,1\}^{N \times D}$ be the reconstruction mask, where $\Omega_{ij}=1$ means that entry $(i,j)$ is hidden from the encoder and used as a prediction target. The masked input is:
\begin{equation}
\tilde{X} = (1-\Omega) \odot X
\end{equation}
A decoder $g_\phi$ maps embeddings back to feature space:
\begin{equation}
\hat{X} = g_\phi(Z)
\end{equation}
We compute reconstruction loss only on hidden entries:
\begin{equation}
\label{eq:lrec}
\mathcal{L}_{\text{rec}} = \frac{1}{|\Omega|} \sum_{i,j} \Omega_{ij} (X_{ij} - \hat{X}_{ij})^2
\end{equation}
where $|\Omega|=\sum_{i,j}\Omega_{ij}$ is the number of masked target entries.

To align representations across modalities, we introduce a cross-modal alignment objective. For modalities $a$ and $b$, we treat modality-specific features $x_i^{(a)}$ and $x_i^{(b)}$ from the same spot as a positive pair, and features from different spots are negatives. We define modality-specific projections:
\begin{equation}
h_i^{(m)} = p_m(z_i^{(m)})
\end{equation}
where $p_m(\cdot)$ is a learnable projection for modality $m$. The alignment loss is a noise-contrastive style objective~\cite{gutmann2010noise}:
\begin{equation}
\label{eq:lalign}
\mathcal{L}_{\text{align}} = - \sum_{i=1}^{N} \log \frac{\exp(\text{sim}(h_i^{(a)}, h_i^{(b)}) / \tau)}{\sum_{j=1}^{N} \exp(\text{sim}(h_i^{(a)}, h_j^{(b)}) / \tau)}
\end{equation}
where $\text{sim}(\cdot, \cdot)$ denotes cosine similarity and $\tau$ is a temperature parameter. To enforce spatial smoothness, we encourage neighboring nodes to have similar embeddings:
\begin{equation}
\label{eq:lspatial}
\mathcal{L}_{\text{spatial}} = \sum_{(i,j) \in E} w_{ij} \| z_i - z_j \|_2^2
\end{equation}

The final objective combines all components:
\begin{equation}
\label{eq:ltotal}
\mathcal{L} = \lambda_1 \mathcal{L}_{\text{rec}} + \lambda_2 \mathcal{L}_{\text{align}} + \lambda_3 \mathcal{L}_{\text{spatial}}
\end{equation}
where $\lambda_1, \lambda_2, \lambda_3$ are hyperparameters.

\subsection{Training and inference}

Training first builds the spatial $k$-NN graph. At each epoch,
LATTICE samples a reconstruction mask, runs the encoder and
decoder, and minimizes the objective $\mathcal{L}$ from
Eq.~\ref{eq:ltotal} with AdamW~\cite{loshchilov2017decoupled}
until early stopping. Algorithm~\ref{alg:lattice} summarizes
the training procedure.

\begin{algorithm}[t]
\caption{Training LATTICE}
\label{alg:lattice}
\footnotesize
\begin{algorithmic}[1]
\Require Multimodal feature matrix $X=[X^{(1)},\dots,X^{(B)}]$, spatial coordinates $S$, graph neighborhood size $k$, encoder $f_\theta$, decoder $g_\phi$, modality projectors $\{p_b\}_{b=1}^B$
\Ensure Spatial embeddings $Z$

\State Build spatial graph $G=(V,E)$ from $k$-nearest neighbors of $S$
\For{each epoch}
    \State Sample reconstruction mask $\Omega$
    \State Mask input features: $\tilde{X} \gets (1-\Omega) \odot X$
    \State Encode graph nodes: $Z \gets f_\theta(\tilde{X}, G)$
    \State Decode features: $\hat{X} \gets g_\phi(Z)$
    \State Compute masked reconstruction loss $\mathcal{L}_{\mathrm{rec}}$
    \State Compute cross-modal alignment loss $\mathcal{L}_{\mathrm{align}}$ across modality pairs
    \State Compute spatial smoothness loss $\mathcal{L}_{\mathrm{spatial}}$
    \State Form total loss:
    \[
    \mathcal{L} =
    \lambda_1 \mathcal{L}_{\mathrm{rec}} +
    \lambda_2 \mathcal{L}_{\mathrm{align}} +
    \lambda_3 \mathcal{L}_{\mathrm{spatial}}
    \]
    \State Update model parameters by backpropagation
\EndFor
\State Compute final embeddings with trained encoder: $Z \gets f_\theta(X, G)$
\State \Return $Z$
\end{algorithmic}
\end{algorithm}

After training, LATTICE produces embeddings $Z$ for each spatial location. These embeddings can be used for downstream tasks such as spatial domain identification, visualization, and integration across datasets. LATTICE is not tied to a specific upstream projection tool or spatial assay. More generally, LATTICE is compatible with any upstream pipeline that produces modality-specific feature blocks aligned to a common spatial unit, including alternatives to SARSIM or ReCAST, making the framework extensible across integration strategies, epigenomic assays, and emerging high-resolution spatial platforms such as Visium HD~\cite{tenx_visium_hd} and Xenium~\cite{tenx_xenium}.

\section{Experiments}
\label{sec:experiments}

Our experiments address three questions:
\begin{enumerate}
\item How does embedding quality change across modality-ladder levels M1 to M5?
\item Do chromatin and histone features improve spatial organization when agreement with an RNA-only reference decreases?
\item Which training components contribute most to the observed gains?
\end{enumerate}

\subsection{Dataset and evaluation setup}
\label{sec:exp_setup}

Table~\ref{tab:processed_samples} summarizes the retained cohort. We start from 14 source samples and retain 11 after quality control, giving 54{,}912 Visium spots. Each retained slide has 4,992 spots. Three patients have paired pre/post-treatment samples. These paired samples capture tissue before and after treatment.

For every spot, LATTICE assembles five aligned blocks: Visium RNA, projected scMultiome RNA, projected scMultiome ATAC gene scores, spatial ATAC, and spatial CUT\&Tag. We evaluate these blocks using the modality ladder levels M1 to M5 defined in Section~\ref{sec:introduction}. ReCAST performs preprocessing and harmonization, SARSIM performs spatially anchored regulatory projection, and LATTICE performs graph SSL representation learning.

\begin{table*}[!bt]
\centering
\caption{Processed cohort used for evaluation. Each retained slide has 4,992 Visium spots. Signal columns report mean post-harmonization values per spot.}
\label{tab:processed_samples}
\resizebox{\textwidth}{!}{
\begin{tabular}{lcccc}
\toprule
\textbf{Processed sample} & \textbf{Multiome cells} & \textbf{Spatial ATAC signal} & \textbf{Spatial CUT\&Tag signal} & \textbf{Final intersected genes} \\
\midrule
1-pre & 9,531 & 2.415 & 0.904 & 248 \\
1-post & 16,936 & 1.934 & 1.547 & 198 \\
2-post & 746 & 1.352 & 1.185 & 150 \\
4-pre & 16,391 & 3.426 & 1.443 & 133 \\
4-post & 15,577 & 3.918 & 1.297 & 322 \\
5-pre & 8,005 & 2.188 & 2.788 & 154 \\
5-post & 10,394 & 2.051 & 2.268 & 129 \\
7-pre & 7,734 & 2.843 & 1.796 & 146 \\
8-pre & 716 & 5.819 & 1.091 & 211 \\
9-pre & 9,791 & 2.532 & 1.105 & 205 \\
10-post & 3,254 & 2.505 & 1.836 & 160 \\
\bottomrule
\end{tabular}
}
\vspace{-4mm}
\end{table*}

ARI and NMI~\cite{strehl2002cluster} compare LATTICE clusters with Space Ranger RNA-derived clusters. Space Ranger is the 10x Genomics analysis pipeline for Visium data~\cite{tenx_spaceranger}, and its RNA-derived clusters are used here as a transcriptomic reference, not ground truth. Spatial contiguity measures whether neighboring spots receive coherent labels, and silhouette measures cluster separation in embedding space.

Because ARI and NMI reward agreement with an RNA-only reference, they can undervalue embeddings that capture chromatin or histone structure beyond transcriptomic similarity. We therefore define a multimodal utility score (MUS) to summarize complementary aspects of spatial and multimodal organization. MUS averages four normalized quantities: spatial contiguity, embedding silhouette, same-cluster spatial-neighbor fraction, and overlap between embedding $k$-nearest neighbors and spatial $k$-nearest neighbors:
\begin{equation}
\mathrm{MUS}_v
=
\frac{1}{4}
\left(
\mathrm{SpotCut}^{\mathrm{norm}}_v
+
\mathrm{Silhouette}^{\mathrm{norm}}_v
+
\mathrm{BioKNN}^{\mathrm{norm}}_v
+
\mathrm{BioJaccard}^{\mathrm{norm}}_v
\right).
\end{equation}
Higher MUS values indicate embeddings that jointly preserve spatial coherence, local neighborhood consistency, and multimodal biological structure. Full normalization details appear in Appendix~\ref{sec:a2}.

Table~\ref{tab:cohort_benchmark_mus} compares LATTICE with spatial and multimodal baselines under the same evaluation metrics. When a baseline has multiple configurations, we report the strongest configuration according to cohort-mean ARI and leave sweep details to the appendix. The raw data cannot be publicly released because of institutional data-use and privacy restrictions. Institution and collaborator identifiers are anonymized for double-blind review and can be restored in the camera-ready version where appropriate.

\subsection{Main results}
\label{sec:exp_main}

\begin{table*}[!tp]
\centering
\caption{Cohort-level benchmarks and LATTICE modality ladder. Modalities are Visium RNA only (M1), plus projected scMultiome RNA (M2), plus scMultiome ATAC gene scores (M3), plus spatial ATAC (M4), and plus spatial CUT\&Tag (M5). ARI/NMI compare with Space Ranger RNA clusters. MUS summarizes spatial and multimodal structure.}
\label{tab:cohort_benchmark_mus}
\resizebox{\textwidth}{!}{
\begin{tabular}{llcc|ccc}
\toprule
\textbf{Method} & \textbf{Modalities} & \textbf{ARI} & \textbf{NMI} & \textbf{Spatial contiguity} & \textbf{Silhouette} & \textbf{MUS} \\
\midrule
GraphST & M1 & $0.423 \pm 0.124$ & $0.536 \pm 0.107$ & $0.837 \pm 0.052$ & $0.055 \pm 0.061$ & $0.666$ \\
STAGATE & M1 & $0.308 \pm 0.136$ & $0.420 \pm 0.130$ & $0.825 \pm 0.061$ & $0.005 \pm 0.069$ & $0.657$ \\
SpaGCN & M1 + histology & $0.408 \pm 0.078$ & $0.553 \pm 0.098$ & $0.707 \pm 0.065$ & $0.426 \pm 0.063$ & $0.387$ \\
SIMO & M1 & $0.358 \pm 0.155$ & $0.457 \pm 0.135$ & $0.697 \pm 0.090$ & $0.064 \pm 0.056$ & $0.139$ \\
MaxFuse & scRNA and spatial ATAC & $0.278 \pm 0.126$ & $0.442 \pm 0.107$ & $0.794 \pm 0.069$ & $0.199 \pm 0.106$ & $0.493$ \\
\midrule
LATTICE & M1 & $0.269 \pm 0.067$ & $0.364 \pm 0.066$ & $0.653 \pm 0.135$ & $0.147 \pm 0.176$ & $0.111$ \\
LATTICE & M2 & $0.426 \pm 0.075$ & $0.507 \pm 0.073$ & $0.827 \pm 0.043$ & $0.440 \pm 0.122$ & $0.733$ \\
LATTICE & M3 & $0.417 \pm 0.082$ & $0.493 \pm 0.077$ & $0.824 \pm 0.053$ & $0.426 \pm 0.137$ & $0.718$ \\
LATTICE & M4 & $0.343 \pm 0.086$ & $0.462 \pm 0.101$ & $0.844 \pm 0.037$ & $\mathbf{0.453 \pm 0.130}$ & $0.801$ \\
\textbf{LATTICE} & \textbf{M5} & $0.329 \pm 0.115$ & $0.450 \pm 0.122$ & $\mathbf{0.850 \pm 0.036}$ & $0.417 \pm 0.088$ & $\mathbf{0.803}$ \\
\bottomrule
\end{tabular}
}
\end{table*}

\begin{figure*}[!tbp]
    \centering
    \includegraphics[width=\textwidth]{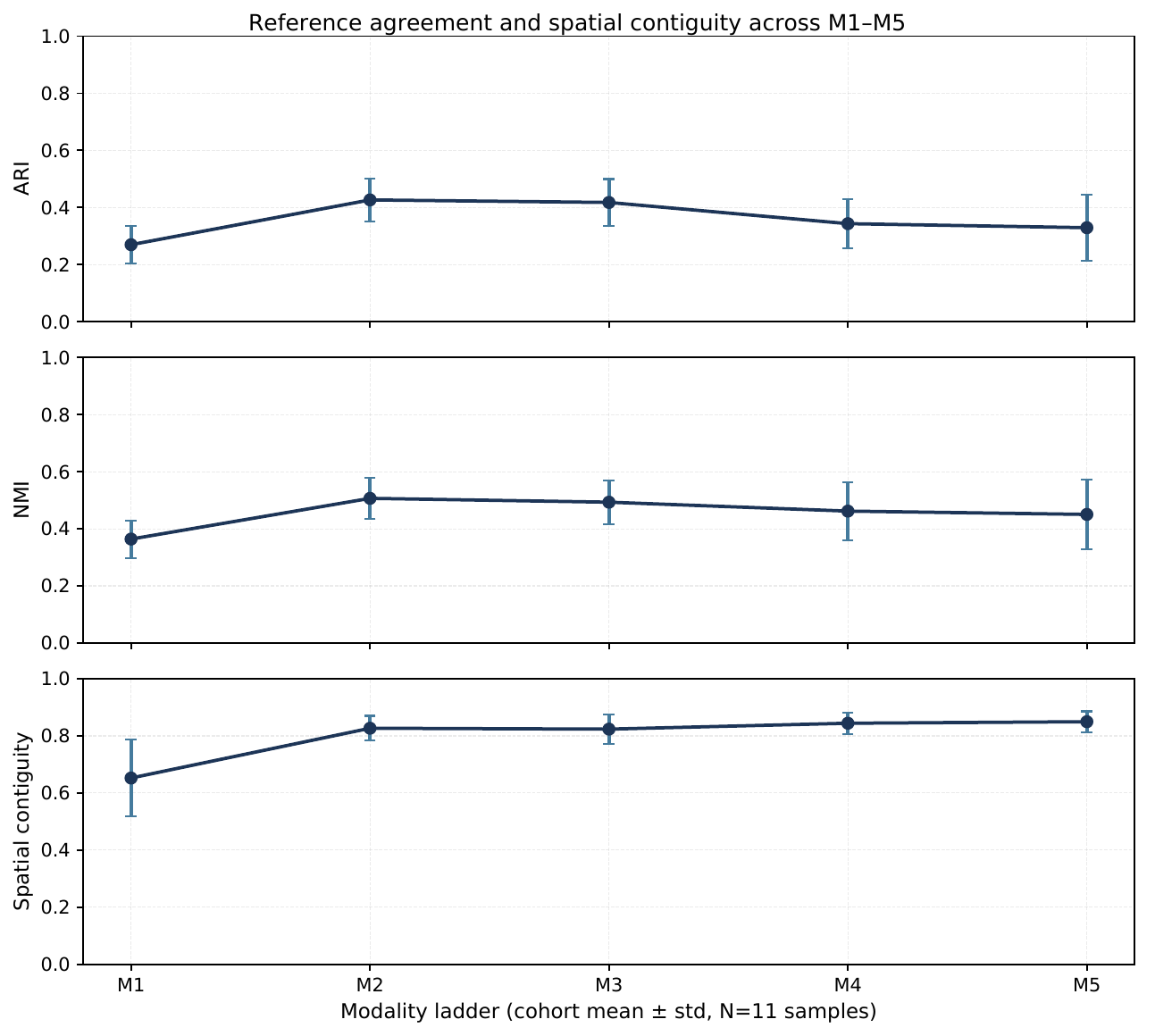}
    \caption{M1 through M5 trends from Table~\ref{tab:cohort_benchmark_mus}. RNA-reference agreement peaks near M2 through M3 (SARSIM-scope inputs). ReCAST-backed M4 through M5 favor contiguity and MUS.}
    \label{fig:modality_ladder}
\end{figure*}

Table~\ref{tab:cohort_benchmark_mus} gives the main cohort-level result, and Figure~\ref{fig:modality_ladder} visualizes cohort-level trends across the M1–M5 modality ladder.
\begin{itemize}
\item The largest change is from M1 to M2. Adding projected scMultiome RNA to Visium RNA improves ARI, NMI, spatial contiguity, silhouette, and MUS.
\item Adding scMultiome ATAC gene scores at M3 keeps performance close to M2.
\item Adding in situ chromatin and histone channels at M4 and M5 changes the behavior. ARI and NMI decrease relative to M2 because the embedding no longer follows the RNA reference alone, while spatial contiguity and MUS increase.
\end{itemize}

LATTICE M5 has the highest MUS, with M4 close behind. LATTICE M1 underperforms RNA-specialized baselines because the architecture and objectives are designed for multimodal representation learning rather than optimization for transcriptomic clustering alone. This supports the claim that full multimodal inputs improve spatial coherence and multimodal neighborhood structure, even when RNA-reference agreement is not maximal.

Figure~\ref{fig:joint_embedding} shows the joint PCA for the three paired patients. Pre- and post- treatment samples separate within each patient, but the shift size differs by patient. This motivates considering pre/post-treatment behavior as patient-specific rather than as a universal treatment signature.

Figures~\ref{fig:spatial_clusters}--\ref{fig:accessibility_gene_links} show how the embeddings support downstream spatial and regulatory analysis for one paired patient: spatial domains remain coherent on tissue coordinates, TF programs vary across Leiden clusters, and accessibility--gene links can be summarized from the learned representation.

\begin{figure}[!tbp]
    \centering
    \includegraphics[width=0.32\linewidth]{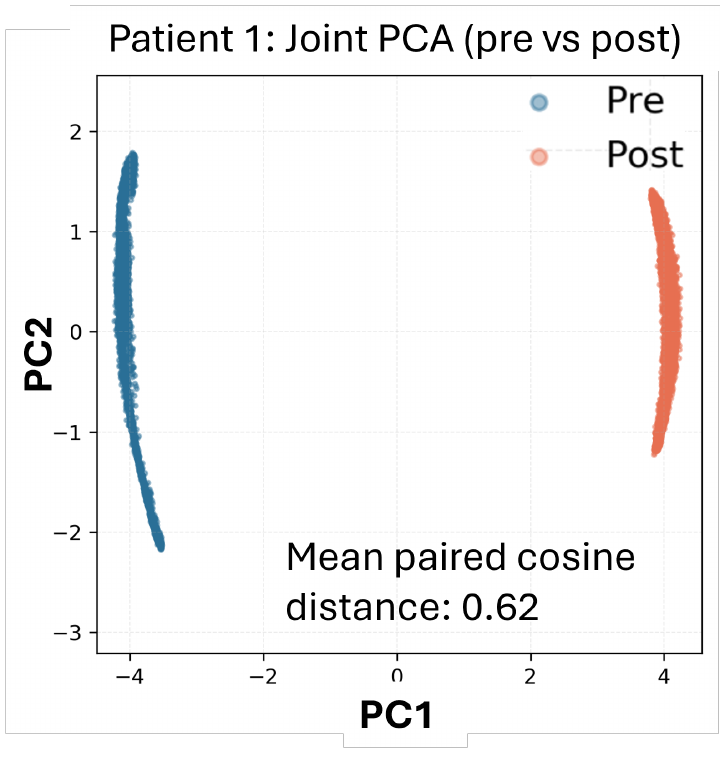}
    \hfill
    \includegraphics[width=0.32\linewidth]{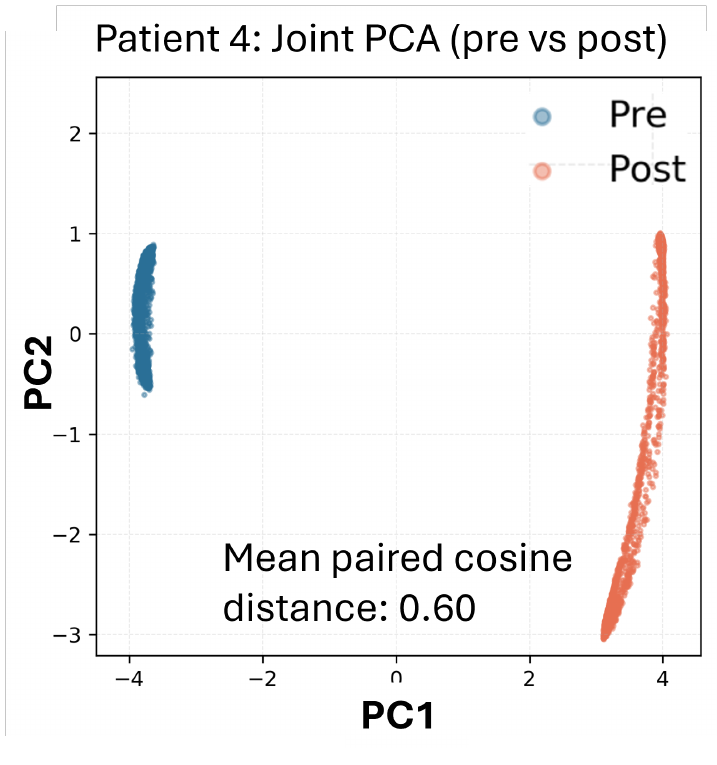}
    \hfill
    \includegraphics[width=0.32\linewidth]{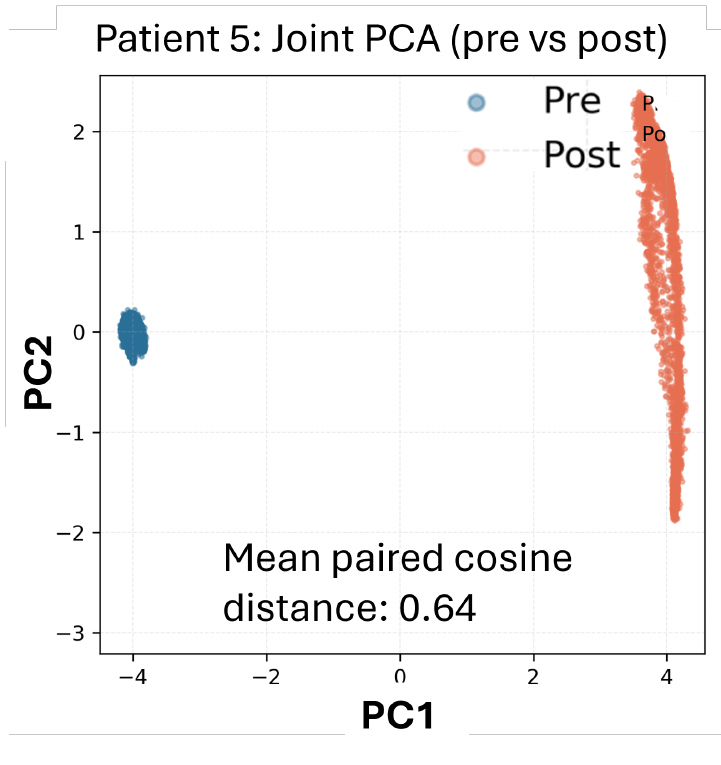}
    \caption{Joint PCA for paired pre/post-treatment samples. Shift magnitude varies by patient.}
    \label{fig:joint_embedding}
\end{figure}

\begin{figure}[!b]
    \centering
    \hfill
    \includegraphics[width=0.35\linewidth]{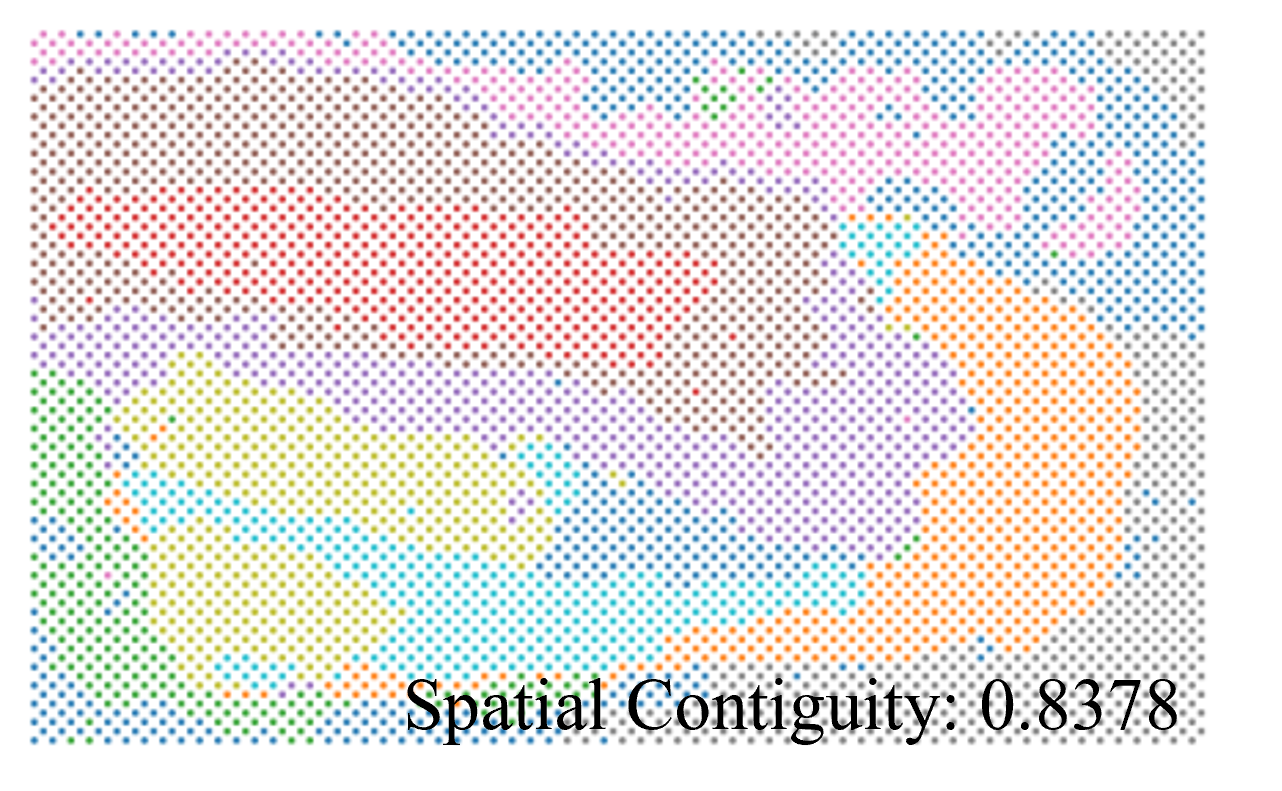}
    \hfill
    \hfill
    \includegraphics[width=0.35\linewidth]{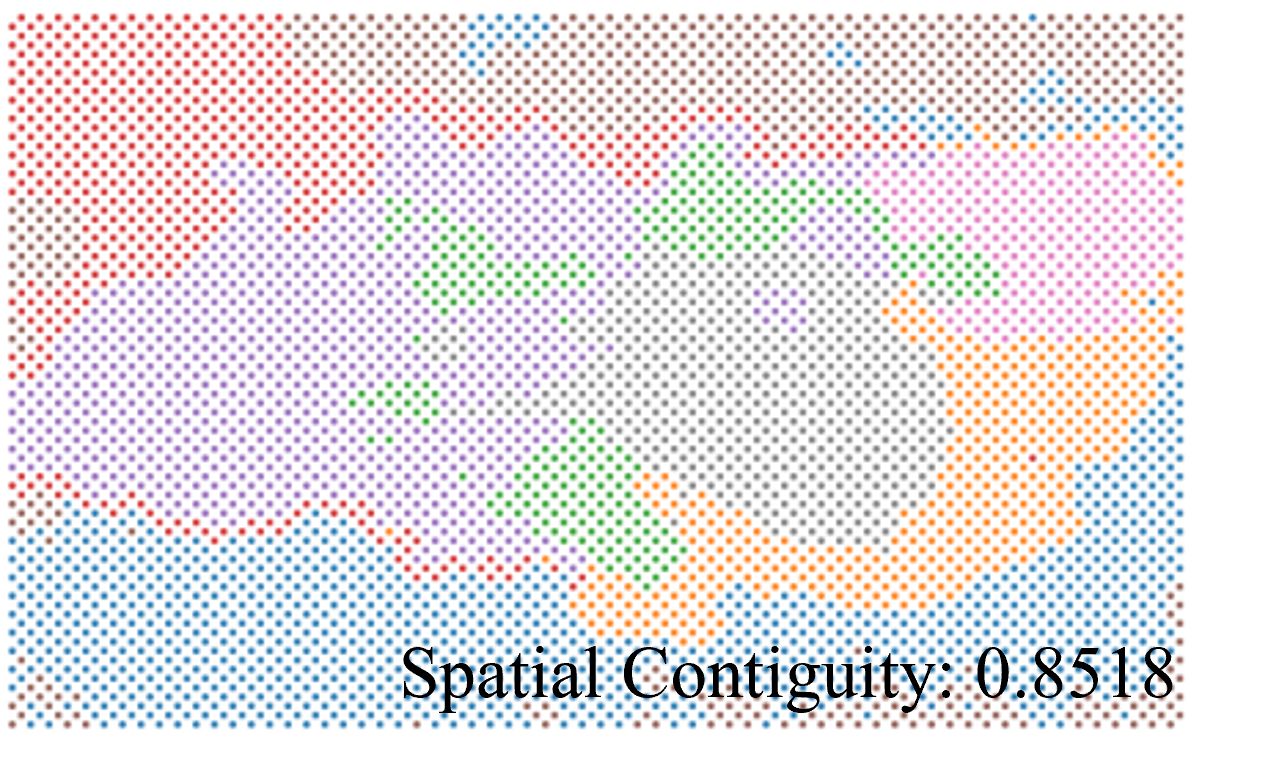}
    \hfill
    \caption{Spatial Leiden clusters for Patient~1 pre-treatment (left) and post-treatment (right)}
    \label{fig:spatial_clusters}
\end{figure}

\begin{figure}[!tbp]
    \centering
    \includegraphics[width=0.47\columnwidth]{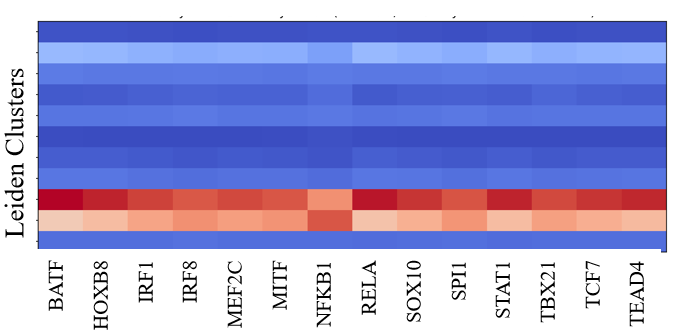}
    \hfill
    \includegraphics[width=0.52\columnwidth]{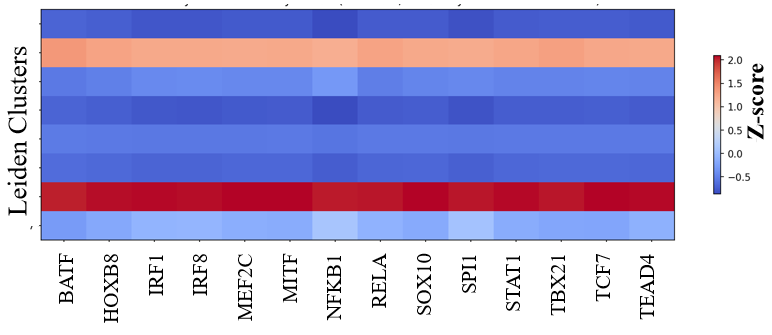}
    \caption{TF program activity by Leiden cluster for Patient~1.}
    \label{fig:tf_program_heatmaps}
\end{figure}

\begin{figure}[!bt]
    \centering
    \includegraphics[width=0.48\linewidth]{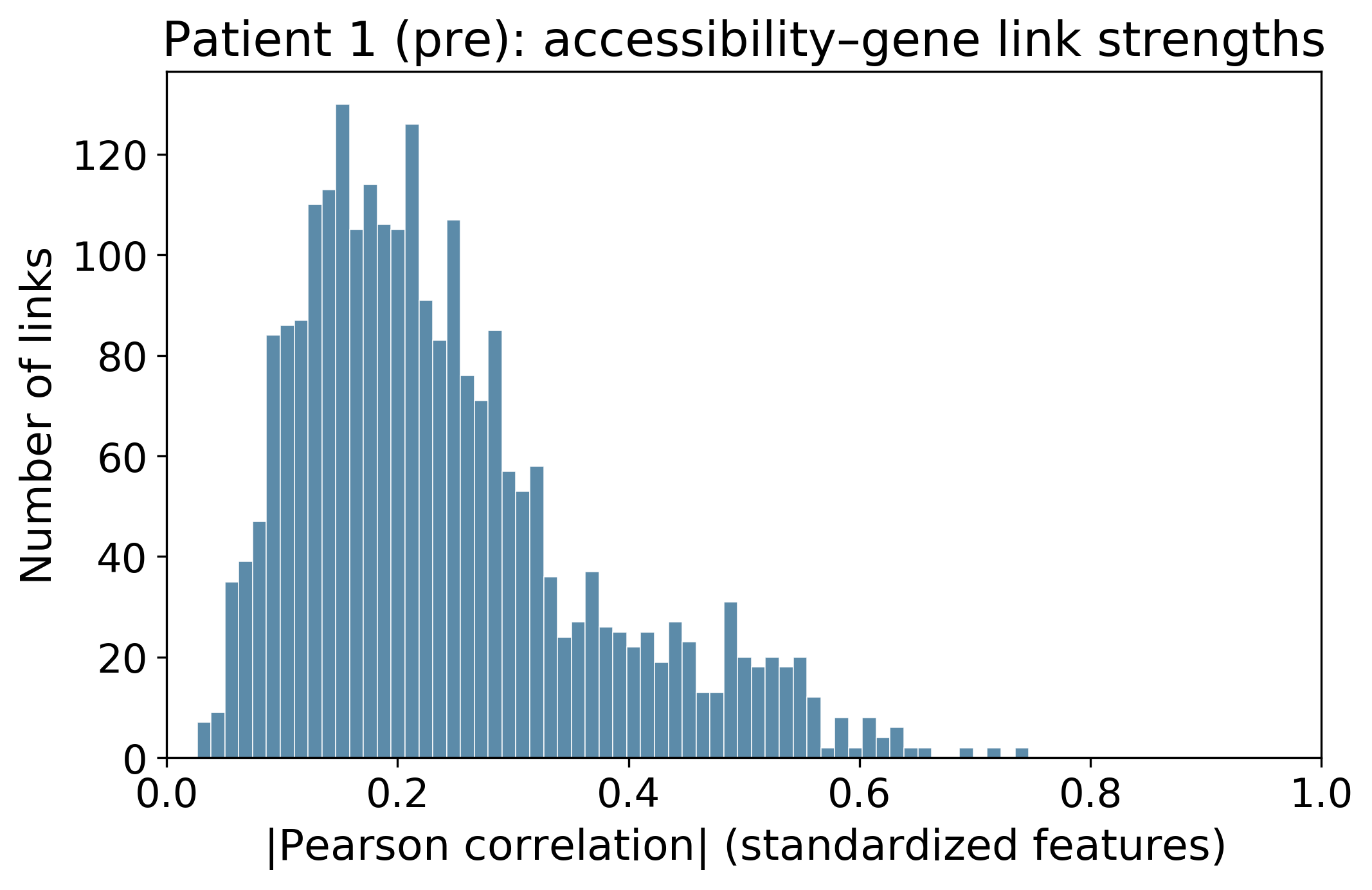}
    \hfill
    \includegraphics[width=0.48\linewidth]{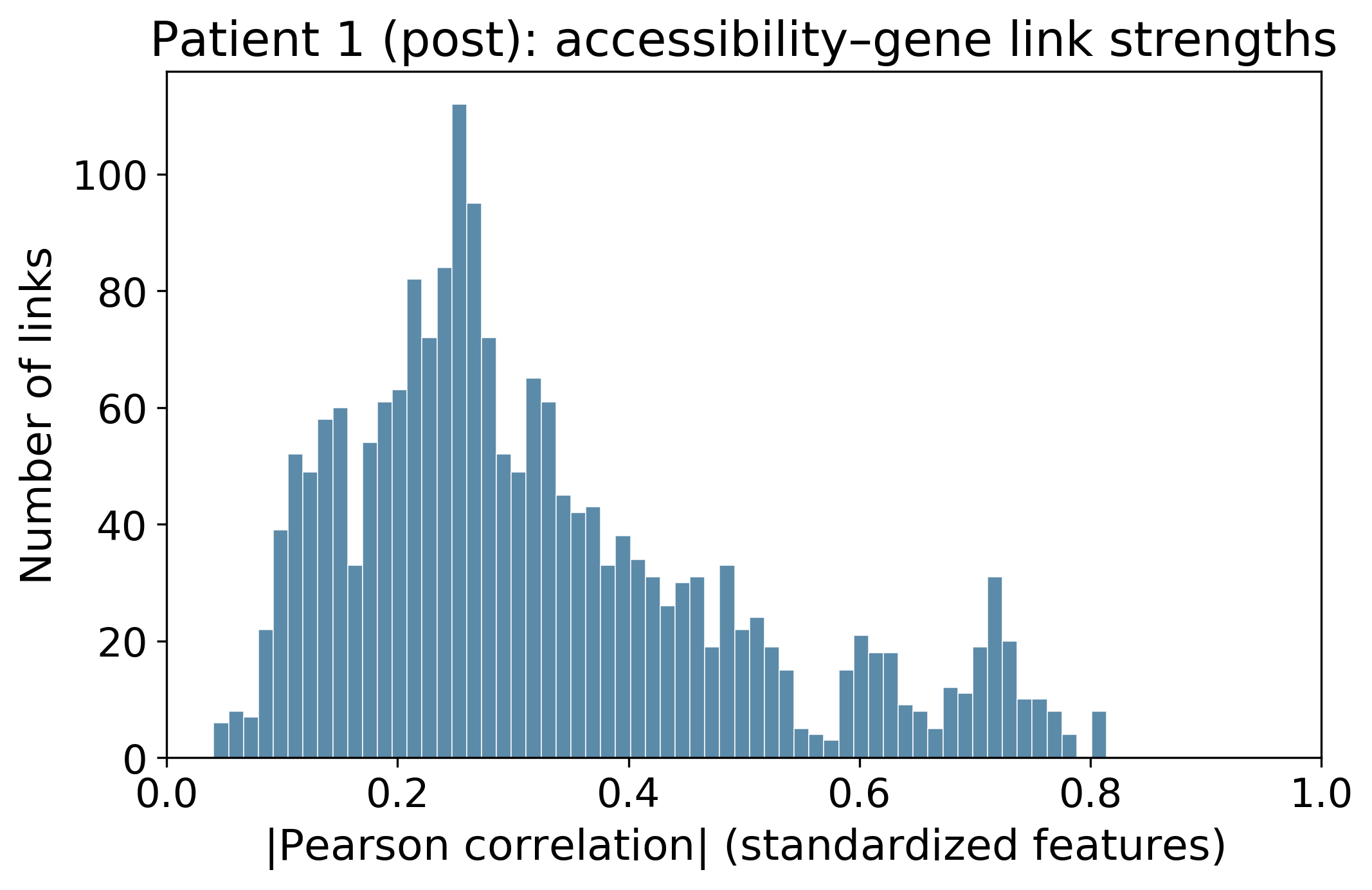}
    \caption{Accessibility--gene link strength distributions for Patient~1.}
    \label{fig:accessibility_gene_links}
\end{figure}

\subsection{Patient-level behavior}
\label{sec:exp_patient}

Paired pre/post-treatment samples show patient-specific shifts. Across the three paired patients, global shift alignment is near zero, and leave-one-patient-out prediction is mixed. We therefore do not claim a consistent cross-patient treatment signature. Instead, these results show that LATTICE captures stable sample-level structure while preserving patient-specific variation.

\subsection{Ablation study}
\label{sec:exp_ablation}

Table~\ref{tab:loss_ablation} separates two questions from the modality ladder. The ladder asks which input blocks help. The ablation table asks which training choices matter when the full M5 input is fixed.

\begin{table}[!tbp]
\centering
\caption{Training ablations and test-time modality dropout using full M5 inputs. Rows remove spatial regularization, masking, spatial ATAC at evaluation, or CUT\&Tag at evaluation. The final row is the full model.}
\label{tab:loss_ablation}
\resizebox{\linewidth}{!}{
\footnotesize
\begin{tabular}{cccc|ccc}
\toprule
Spatial reg. & Masking & Spatial ATAC at eval. & CUT\&Tag at eval. & Spatial contiguity & Silhouette & MUS \\
\midrule
-- & \checkmark & \checkmark & \checkmark & $0.783 \pm 0.076$ & $0.294 \pm 0.067$ & $0.546$ \\
\checkmark & -- & \checkmark & \checkmark & $0.849 \pm 0.044$ & $0.398 \pm 0.083$ & $0.790$ \\
\checkmark & \checkmark & -- & \checkmark & $0.838 \pm 0.044$ & $\mathbf{0.430 \pm 0.154}$ & $0.765$ \\
\checkmark & \checkmark & \checkmark & -- & $0.839 \pm 0.027$ & $0.417 \pm 0.117$ & $0.769$ \\
\midrule
\checkmark & \checkmark & \checkmark & \checkmark & $\mathbf{0.850 \pm 0.036}$ & $0.417 \pm 0.088$ & $\mathbf{0.803}$ \\
\bottomrule
\end{tabular}
}
\end{table}

\noindent Removing spatial regularization lowers spatial contiguity most, supporting the spatial loss. Removing masking changes RNA-reference agreement but does not improve silhouette, suggesting that masking shapes embedding geometry. Withholding chromatin channels at test time does not destabilize training or inference but the metric changes are not uniformly better. Robustness therefore means operational stability, not guaranteed improvement.

\vspace{-2mm}
\section{Conclusion}

LATTICE is a reproducible graph self-supervised learning framework that harmonizes five spot-aligned assay blocks and combines masked reconstruction, cross-modal alignment, and spatial regularization for multimodal spatial omics integration. Across an 11-sample melanoma cohort, LATTICE showed substantial gains when moving from Visium RNA alone to Visium RNA plus projected scMultiome RNA, while full multimodal inputs produced the strongest spatial coherence and MUS performance. ARI and NMI~\cite{strehl2002cluster} did not increase monotonically with additional epigenomic channels, suggesting that the learned embeddings capture regulatory and chromatin structure beyond transcriptome-only organization. Within the upstream integration stack, Visium RNA with projected scMultiome RNA and projected scMultiome ATAC gene scores already performs well within the SARSIM assay scope~\cite{dwarampudi2026spatially}, while adding ReCAST-derived spatial ATAC and spatial CUT\&Tag further improves spatial structure metrics. Paired and LOPO analyses did not reveal a consistent cross-patient treatment signature, indicating that treatment-associated shifts remain patient specific in the current cohort. Richer multimodal views of tissue can support discovery when they are interpreted with appropriate safeguards, and we hope this line of work contributes careful tooling that complements expert judgment rather than replacing it. Future work includes deeper biological validation, broader external benchmarking, and evaluation against additional fully multimodal spatial baselines as such datasets become available.

\bibliographystyle{unsrtnat}
\bibliography{references}

@article{long2023spatially,
  title={Spatially informed clustering, integration, and deconvolution of spatial transcriptomics with GraphST},
  author={Long, Yahui and Ang, Kok Siong and Li, Mengwei and Chong, Kian Long Kelvin and Sethi, Raman and Zhong, Chengwei and Xu, Hang and Ong, Zhiwei and Sachaphibulkij, Karishma and Chen, Ao and others},
  journal={Nature communications},
  volume={14},
  number={1},
  pages={1155},
  year={2023},
  publisher={Nature Publishing Group UK London}
}

@article{dong2022deciphering,
  title={Deciphering spatial domains from spatially resolved transcriptomics with an adaptive graph attention auto-encoder},
  author={Dong, Kangning and Zhang, Shihua},
  journal={Nature communications},
  volume={13},
  number={1},
  pages={1739},
  year={2022},
  publisher={Nature Publishing Group UK London}
}

@article{hu2021spagcn,
  title={SpaGCN: Integrating gene expression, spatial location and histology to identify spatial domains and spatially variable genes by graph convolutional network},
  author={Hu, Jian and Li, Xiangjie and Coleman, Kyle and Schroeder, Amelia and Ma, Nan and Irwin, David J and Lee, Edward B and Shinohara, Russell T and Li, Mingyao},
  journal={Nature methods},
  volume={18},
  number={11},
  pages={1342--1351},
  year={2021},
  publisher={Nature Publishing Group US New York}
}

@article{yang2025spatial,
  title={Spatial integration of multi-omics single-cell data with SIMO},
  author={Yang, Penghui and Jin, Kaiyu and Yao, Yue and Jin, Lijun and Shao, Xin and Li, Chengyu and Lu, Xiaoyan and Fan, Xiaohui},
  journal={Nature communications},
  volume={16},
  number={1},
  pages={1265},
  year={2025},
  publisher={Nature Publishing Group UK London}
}

@article{zhu2024maxfuse,
  title={MaxFuse enables data integration across weakly linked spatial and single-cell modalities},
  author={Zhu, Bokai and Ma, Zongming},
  journal={Nature biotechnology},
  volume={42},
  number={7},
  pages={1036--1037},
  year={2024},
  publisher={Nature Portfolio}
}

@article{zhang2026robust,
  title={Robust integration and annotation of single-cell and spatial omics data using interpretable gene programs},
  author={Zhang, Yuelei and Ming, Wenxuan and Yu, Bianjiong and Wang, Lele and Lu, Kaiyan and Xu, Lei and Ni, Yanhong and Deng, Runzhi and Chen, Dijun},
  journal={Cell Genomics},
  volume={6},
  number={4},
  year={2026},
  publisher={Elsevier}
}

@article{staahl2016visualization,
  title={Visualization and analysis of gene expression in tissue sections by spatial transcriptomics},
  author={St{\aa}hl, Patrik L and Salm{\'e}n, Fredrik and Vickovic, Sanja and Lundmark, Anna and Navarro, Jos{\'e} Fern{\'a}ndez and Magnusson, Jens and Giacomello, Stefania and Asp, Michaela and Westholm, Jakub O and Huss, Mikael and others},
  journal={Science},
  volume={353},
  number={6294},
  pages={78--82},
  year={2016},
  publisher={American Association for the Advancement of Science}
}

@article{kaya2019cut,
  title={CUT\&Tag for efficient epigenomic profiling of small samples and single cells},
  author={Kaya-Okur, Hatice S and Wu, Steven J and Codomo, Christine A and Pledger, Erica S and Bryson, Terri D and Henikoff, Jorja G and Ahmad, Kami and Henikoff, Steven},
  journal={Nature communications},
  volume={10},
  number={1},
  pages={1930},
  year={2019},
  publisher={Nature Publishing Group UK London}
}

@article{traag2019louvain,
  title={From Louvain to Leiden: guaranteeing well-connected communities},
  author={Traag, Vincent A and Waltman, Ludo and Van Eck, Nees Jan},
  journal={Scientific reports},
  volume={9},
  number={1},
  pages={5233},
  year={2019},
  publisher={Nature Publishing Group UK London}
}

@article{mcinnes2018umap,
  title={Umap: Uniform manifold approximation and projection for dimension reduction},
  author={McInnes, Leland and Healy, John and Melville, James},
  journal={arXiv preprint arXiv:1802.03426},
  year={2018}
}

@article{strehl2002cluster,
  title={Cluster ensembles---a knowledge reuse framework for combining multiple partitions},
  author={Strehl, Alexander and Ghosh, Joydeep},
  journal={Journal of machine learning research},
  volume={3},
  number={Dec},
  pages={583--617},
  year={2002}
}

@article{rousseeuw1987silhouettes,
  title={Silhouettes: a graphical aid to the interpretation and validation of cluster analysis},
  author={Rousseeuw, Peter J},
  journal={Journal of computational and applied mathematics},
  volume={20},
  pages={53--65},
  year={1987},
  publisher={Elsevier}
}

@article{paszke2019pytorch,
  title={Pytorch: An imperative style, high-performance deep learning library},
  author={Paszke, Adam and Gross, Sam and Massa, Francisco and Lerer, Adam and Bradbury, James and Chanan, Gregory and Killeen, Trevor and Lin, Zeming and Gimelshein, Natalia and Antiga, Luca and others},
  journal={Advances in neural information processing systems},
  volume={32},
  year={2019}
}

@article{fey2019fast,
  title={Fast graph representation learning with PyTorch Geometric},
  author={Fey, Matthias and Lenssen, Jan Eric},
  journal={arXiv preprint arXiv:1903.02428},
  year={2019}
}

@article{loshchilov2017decoupled,
  title={Decoupled weight decay regularization},
  author={Loshchilov, Ilya and Hutter, Frank},
  journal={arXiv preprint arXiv:1711.05101},
  year={2017}
}

@inproceedings{gutmann2010noise,
  title={Noise-contrastive estimation: A new estimation principle for unnormalized statistical models},
  author={Gutmann, Michael and Hyv{\"a}rinen, Aapo},
  booktitle={Proceedings of the thirteenth international conference on artificial intelligence and statistics},
  pages={297--304},
  year={2010},
  organization={JMLR Workshop and Conference Proceedings}
}

@article{dwarampudi2026spatially,
  title={Spatially Anchored Regulatory State Inference in Melanoma},
  author={Dwarampudi, Jagan Mohan Reddy and Kochat, Veena and Satpati, Suresh and Mahmud, Md Ishtyaq and Anzum, Humaira and Wani, Khalida and Lazar, Alexander and Saw, Ajay Kumar and Malke, Jared and Nguyen, Hien V and others},
  journal={bioRxiv},
  year={2026},
  publisher={Cold Spring Harbor Laboratory}
}

@misc{tenx_visium_hd,
  title        = {HD Spatial Gene Expression},
  author       = {{10x Genomics}},
  year         = {2024},
  url          = {https://www.10xgenomics.com/support/spatial-gene-expression-hd},
  urldate      = {2026-05-06},
  note         = {Accessed: 2026-05-06}
}

@misc{tenx_xenium,
  title        = {Xenium In Situ Platform},
  author       = {{10x Genomics}},
  year         = {2024},
  url          = {https://www.10xgenomics.com/platforms/xenium},
  urldate      = {2026-05-06},
  note         = {Accessed: 2026-05-06}
}

@misc{tenx_spaceranger,
  title        = {Space Ranger},
  author       = {{10x Genomics}},
  year         = {2026},
  url          = {https://www.10xgenomics.com/support/software/space-ranger/latest},
  urldate      = {2026-05-06},
  note         = {Accessed: 2026-05-06}
}


\appendix
\setcounter{figure}{0}
\renewcommand{\thefigure}{A\arabic{figure}}

\section{Extended Experimental Details}
\label{sec:appendix_setup}

\subsection{Datasets and Preprocessing}

The successful batch uses 11 private collaborator sample runs (54{,}912 total spots), each with five modality blocks aligned on the strict SARSIM-anchor five-way gene intersection. In the reported configuration, highly variable gene (HVG) filtering is configured but disabled (\texttt{apply\_hvg\_filter=false}). Modality preprocessing uses per-modality \texttt{log1p\_then\_zscore} before concatenation.

\subsection{Evaluation Metrics\label{sec:a2}}

\paragraph{Multimodal Utility Score (MUS).}
Let \(\mathrm{SpotCut}_v\) denote graph-edge spatial contiguity on the spot kNN graph, \(\mathrm{Silhouette}_v\) the mean cosine silhouette~\cite{rousseeuw1987silhouettes} of embeddings with respect to Leiden clusters~\cite{traag2019louvain}, \(\mathrm{BioKNN}_v\) the same-cluster neighbor consistency from single-sample biological diagnostics, and \(\mathrm{BioJaccard}_v\) the mean Jaccard overlap between embedding kNNs and spatial kNNs. Each term is min--max normalized across all methods and LATTICE modality-ladder rows in the cohort evaluation pool before averaging:
\[
\mathrm{MUS}_v=\tfrac{1}{4}\left(
\mathrm{SpotCut}^{\mathrm{norm}}_v+
\mathrm{Silhouette}^{\mathrm{norm}}_v+
\mathrm{BioKNN}^{\mathrm{norm}}_v+
\mathrm{BioJaccard}^{\mathrm{norm}}_v
\right).
\]
The resulting MUS values rank rows in Table~\ref{tab:cohort_benchmark_mus}. External methods use the same graph construction and $K$-matching procedure as LATTICE with a tractable Leiden~\cite{traag2019louvain} resolution sweep. Additional implementation details accompany the released artifacts.

\paragraph{Marker Gene Enrichment Score.}
To quantify biological consistency, we define a marker gene enrichment score for each cluster. Let $G_c$ denote a set of known marker genes for cluster $c$. For each gene $g \in G_c$, we compute:
\[
\Delta_{g,c} = \bar{x}_{g}^{\text{in}} - \bar{x}_{g}^{\text{out}}.
\]

The cluster score is:
\[
\text{Score}_c = \frac{1}{|G_c|} \sum_{g \in G_c} \Delta_{g,c},
\]
and the overall score:
\[
\text{Marker Score} = \frac{1}{C} \sum_{c=1}^{C} \text{Score}_c.
\]

Higher values indicate stronger enrichment of marker genes within predicted clusters.

\section{Extended Ablation Studies}
\label{sec:appendix_ablation}

\subsection{Relation to Main-Text Ablations}

Cohort-scale evidence for \emph{which} modalities are included appears in the M1 through M5 ladder (Table~\ref{tab:cohort_benchmark_mus}). Evidence for \emph{training} choices and \emph{test-time} withholding of chromatin blocks while keeping the full M5 tensor definition appears in Table~\ref{tab:loss_ablation}. Together these tables cover RNA-only through full multimodal inputs, spatial regularization and masking, and evaluation when spatial ATAC or spatial CUT\&Tag are withheld at test time. We do not report an additional appendix grid beyond these cohort summaries.

\subsection{Effect of Modality Combinations}

We evaluate the configured modality-combination variants exported by the analysis matrix. Improvements are strongest for Visium RNA alone to Visium RNA plus projected scMultiome RNA, while additional modalities show metric-dependent trade-offs (for example, better spatial contiguity without uniformly higher ARI/NMI).

\subsection{Effect of Graph Modeling}

Training keeps the spot $k$-NN graph for graph convolution while optionally applying explicit spatial smoothness in the objective (Section~\ref{sec:exp_ablation}). Removing spatial regularization lowers cohort spatial contiguity and RNA-reference agreement relative to LATTICE M5 (Table~\ref{tab:loss_ablation}), supporting explicit spatial coupling beyond graph propagation alone.

\section{Robustness to Missing Modalities}
\label{sec:appendix_robustness}

\subsection{Test-Time Modality Withholding}

We follow the same test-time withholding protocol as Section~\ref{sec:exp_ablation}: chromatin blocks may be omitted at evaluation while the model trained on full M5 inputs is held fixed. Cohort-level outcomes appear in Table~\ref{tab:loss_ablation} (rows with spatial ATAC or CUT\&Tag absent at evaluation). Runs complete without failures and remain numerically stable, but metric changes are mixed, so we do not claim uniform superiority under all missing-modality patterns.

\subsection{Comparison to Naive Handling}

The robustness discussion in the main text applies here as well: graceful execution is observed, without monotonic gains across all metrics when modalities are missing at test time.

\section{Hyperparameter and Implementation Summary}
\label{sec:appendix_sensitivity}

Fixed hyperparameters for the reported batch (architecture, loss weights, optimization, graph neighborhood size, masking ratio, and alignment temperature) are listed in Appendix~\ref{sec:implementation}. We do not include quantitative one-dimensional sensitivity curves in this submission. Such sweeps can accompany extended reproducibility materials if needed.

\section{Patient-Level Analysis}
\label{sec:appendix_patient}

\subsection{Per-Patient Performance}

Per-sample metrics and paired pre/post-treatment summaries appear in the main text (Section~\ref{sec:exp_patient}) alongside cohort-level tables.

\subsection{Cross-Patient Generalization}

Leave-one-patient-out analyses in Section~\ref{sec:exp_patient} show heterogeneous pre/post-treatment prediction across patients. We therefore describe current evidence as supporting stable per-sample embeddings without consistent cross-patient generalization for treatment prediction.

\section{ReCAST Pipeline Notes}
\label{sec:appendix_recast}

ReCAST is the internal preprocessing and harmonization stage used to prepare multimodal tensors before SARSIM and LATTICE. It is an internal engineering pipeline and is included here for reproducibility of this submission's data flow. In our stack, ReCAST is responsible for ingesting raw per-sample Visium and scMultiome exports, applying sample-level quality checks, and producing aligned matrices and manifests consumed by downstream stages.

\paragraph{Scope and outputs.}
At a high level, ReCAST performs three functions. First, it standardizes modality-level inputs into a common sample schema (spot identifiers, feature indices, and QC metadata). Second, it runs cross-modality gene and feature harmonization, including consistency checks needed for downstream five-block concatenation. Third, it exports machine-readable manifests (sample registry, modality availability, and preprocessing parameters) that are versioned with each run so later stages can be replayed deterministically.

\paragraph{Role in this paper.}
For this study, ReCAST provides the harmonized spatial ATAC and spatial CUT\&Tag blocks (reported as ReCAST CGMC in the main text), as well as quality-controlled sample manifests used to define the 14-source/11-successful cohort. SARSIM~\cite{dwarampudi2026spatially} then operates on the harmonized cohort to produce spatially anchored regulatory projections and overlap-gene artifacts. LATTICE consumes those exported blocks and artifacts to train and evaluate the embedding model. This staged organization separates engineering harmonization (ReCAST), regulatory projection (SARSIM), and representation learning (LATTICE), and keeps each component's scope explicit during review. We do not present ReCAST as a scientific baseline or claim methodological novelty for it here. Its purpose in this manuscript is operational reproducibility of the multimodal tensor assembly. Camera-ready materials can include a public code release and fuller implementation documentation for this stage.

\section{Disclosures for reproducibility and responsible research}
\label{sec:appendix_disclosures}
This appendix states how we share artifacts, what ran on the cluster, and how human data were governed. Model and training specifics appear in Appendix~\ref{sec:implementation}.

\subsection{Supplementary code and data availability}
\label{sec:appendix_code_data}
We include anonymized code, Slurm driver scripts, pinned dependency manifests, and run snapshots as supplementary material. The cohort tensors are de-identified clinical biospecimen-derived profiles under a collaborator institution's proprietary agreement, cannot be redistributed publicly, and have no public five-modality substitute at this lattice resolution.

\subsection{Compute scheduling and resources}
\label{sec:appendix_compute}
All reported LATTICE training used CPU execution through manifests that set \texttt{device=cpu}, so reproducing the batch at Visium density does not require a GPU.
In practice each end-to-end per-sample LATTICE job occupied one cluster node provisioned with sixteen CPU cores and 128~GB RAM, and the joint multisample follow-up used the same one-node, sixteen-core, 128~GB footprint.
ReCAST harmonization for spatial epigenomics ran on single nodes with eight cores and 64~GB RAM.
Peak resident set size stayed below those RAM ceilings for completed jobs.
Archived cluster logs that bracket a full per-sample run from harmonization through graph SSL, embedding export, and routine clustering or plot steps show roughly eight minutes of wall clock under light queue contention, which we take as the indicative per-sample duration for planning.
Across the retained cohort, early stopping spans 36--63 epochs with a mean of 45.2, so runs that exit earlier skew shorter while queue wait or heavier diagnostics skew longer.
Supplementary logs retain timestamps and optional accounting exports for readers who need finer per-job estimates.

\subsection{Research ethics and data agreements}
\label{sec:appendix_ethics}
Ethics and IRB oversight for human biospecimens are handled by the collaborating clinical institution. A data transfer agreement is in place between that institution and the authors' site.
For double-blind review, identifying protocol numbers and committee names are withheld. They can be restored in the camera-ready version together with any journal-required ethics boilerplate.

\section{Implementation Details}
\label{sec:implementation}

LATTICE was implemented in Python using \texttt{PyTorch}~\cite{paszke2019pytorch} and \texttt{PyTorch Geometric}~\cite{fey2019fast}. The pipeline consists of a multimodal integration followed by graph-based self-supervised representation learning. In the integration stage, spatial transcriptomic measurements are combined with projected single-cell and epigenomic modalities to construct a concatenated spot-by-feature matrix. All downstream learning is performed at the level of \emph{spatial capture spots}, consistent with the resolution of the available data.

\paragraph{Third-party software and licenses.}
\label{sec:appendix_licenses}
We additionally rely on Scanpy-style tooling where applicable and on 10x Genomics Space Ranger~\cite{tenx_spaceranger} for RNA-derived reference clusters. Each third-party component remains governed by its upstream license (BSD-style terms for PyTorch, MIT terms for PyTorch Geometric, and commercial terms for Space Ranger). The supplementary bundle ships pinned dependency files (\texttt{environment.yml} and/or \texttt{requirements.txt}) so recipients can verify versions and license compatibility before reuse.

\paragraph{Input features.}
For each sample, we construct a multimodal feature matrix
\[
X = [X^{(1)}, X^{(2)}, \dots, X^{(M)}] \in \mathbb{R}^{N \times D},
\]
where $N$ is the number of spots, $M$ is the number of modality blocks, and $D=\sum_{m=1}^{M} d_m$ is the total feature dimension. In our experiments, $M=5$ with modality blocks: Visium RNA, spatial ATAC (CGMC prediction), spatial CUT\&Tag (CGMC prediction), projected scMultiome RNA, and projected scMultiome ATAC (gene scores). Harmonization uses a strict five-way gene intersection anchored on SARSIM \texttt{overlap\_genes.txt}, and each modality block is standardized with \texttt{log1p\_then\_zscore} before concatenation. Because the final intersected gene set is sample-specific, $D=5G$ varies across runs, and in the successful batch, $G \in [129,322]$.

\paragraph{Graph construction.}
We build a kNN spatial graph from Visium array coordinates with $k=6$ and union symmetrization. Radius-graph mode is implemented in configuration validation but was not used in the reported runs. For $N=4{,}992$ spots per sample, the expected directed edge count before deduplication is approximately $N \times k \approx 29{,}952$.

\paragraph{Encoder architecture.}
Each modality block is first projected into a shared hidden space using a modality-specific linear adapter. Let $d_h$ denote the hidden dimension. We set
\[
d_h = 128.
\]
The projected modality representations are fused by modality-aware mean pooling (weighted by observed modality presence per spot). The fused representation is then passed through a stack of graph encoder layers. In the current implementation, the encoder uses \textbf{TransformerConv} layers with:
\begin{itemize}
    \item number of graph layers: $3$
    \item number of attention heads: $4$
    \item hidden dimension per layer: $128$
    \item dropout rate: $0.1$
    \item activation function: \textbf{ReLU}
    \item normalization: \textbf{LayerNorm}
\end{itemize}
The final embedding dimension is
\[
d = 128.
\]

\paragraph{Decoder architecture.}
A feedforward decoder reconstructs the full multimodal input from the learned embedding. The decoder maps from $\mathbb{R}^{d}$ back to $\mathbb{R}^{D}$ using one hidden layer of width $2d$ (i.e., $256$ units when $d=128$) followed by a linear output layer.

\paragraph{Masked reconstruction.}
During training, we apply random modality-block masking at the feature level. The masking ratio is set to
\[
\rho = 0.15.
\]
Let $\Omega \in \{0,1\}^{N \times D}$ denote the binary reconstruction mask. Reconstruction error is computed only on masked entries using the \textbf{Huber loss} with parameter
\[
\delta = 1.0.
\]

\paragraph{Cross-modal alignment.}
Cross-modal alignment uses a noise-contrastive estimation (NCE)-style objective~\cite{gutmann2010noise} applied to selected modality pairs. In the reported configuration, modality indices zero and one correspond to Visium RNA and spatial ATAC latent branches after graph refinement. Each modality block is passed through a projection head of output dimension
\[
d_c = 64.
\]
The projection head uses a two-layer MLP with hidden width $64$. The temperature parameter in the alignment loss is set to
\[
\tau = 0.1.
\]

\paragraph{Spatial regularization.}
To enforce spatial consistency, we apply graph Laplacian smoothness in the reported runs:
\[
\sum_{(i,j)\in E} w_{ij}\|z_i-z_j\|^2,
\]
with edge coupling induced by the constructed kNN graph. An alternative neighbor-based alignment regularizer is implemented but was not used for the successful batch.

\paragraph{Overall objective.}
The final training loss is
\[
\mathcal{L} = \lambda_1 \mathcal{L}_{\mathrm{rec}} + \lambda_2 \mathcal{L}_{\mathrm{align}} + \lambda_3 \mathcal{L}_{\mathrm{spatial}},
\]
with loss weights:
\[
\lambda_1 = 1.0, \qquad
\lambda_2 = 0.5, \qquad
\lambda_3 = 0.1.
\]

\paragraph{Optimization.}
We optimize the model using AdamW~\cite{loshchilov2017decoupled} with:
\begin{itemize}
    \item learning rate: $1\times10^{-3}$
    \item weight decay: $1\times10^{-4}$
    \item batch size: full graph
    \item number of epochs: up to $100$ (early stopping enabled)
    \item early stopping patience: $20$
    \item gradient clipping norm: $1.0$
\end{itemize}
Training uses full-graph optimization (neighbor sampling is not enabled in the current implementation). Automatic mixed precision is disabled. The validation fraction is set to
\[
0.1,
\]
with train/validation masks generated at the node level.

\paragraph{Reproducibility.}
Single-sample runs use \texttt{global\_seed=42} (propagated into graph-SSL training), with deterministic cuDNN flags enabled when CUDA is used and strict deterministic algorithms disabled by default. Joint multisample analysis uses seed $7$ and reports stability summaries over 11 analysis seeds \{7, 11, 19, 23, 29, 31, 37, 41, 43, 47, 53\}. Model checkpoints, training logs, config snapshots, and embeddings are saved for all runs.

\paragraph{Hardware and runtime.}
Training used CPU-only manifests (\texttt{device=cpu}). Resource footprints and indicative wall times appear in Appendix~\ref{sec:appendix_compute}. Supplementary Slurm logs and optional \texttt{sacct} exports provide per-job timestamps and peak RSS where captured by the scheduler.

\paragraph{Downstream analysis.}
After training, embeddings are clustered by Leiden~\cite{traag2019louvain} with target cluster count $K$ imported from SARSIM clustering metadata and achieved via a resolution sweep. Visualizations are generated with PCA and UMAP~\cite{mcinnes2018umap} (UMAP fallback to PCA if \texttt{umap-learn} is unavailable). Predicted spot clusters are compared with Space Ranger clustering labels using ARI and NMI~\cite{strehl2002cluster} when overlap is available.
\section{Theoretical Analysis of LATTICE}
\label{sec:theory}

In this appendix, we provide theoretical insights into the behavior of the LATTICE objective. We show that the combination of masked reconstruction, cross-modal alignment, and spatial regularization yields embeddings that are (i) spatially smooth, (ii) informative with respect to input modalities, and (iii) aligned across modalities.

\subsection{Preliminaries}

Let $G = (V,E)$ denote the spatial graph with weighted adjacency matrix $W = [w_{ij}]$. Let $Z \in \mathbb{R}^{N \times d}$ denote the learned embeddings, and $X = [X^{(1)}, \dots, X^{(M)}]$ the multimodal input features.

The LATTICE objective is:
\begin{equation}
\mathcal{L} = \lambda_1 \mathcal{L}_{\mathrm{rec}} + \lambda_2 \mathcal{L}_{\mathrm{align}} + \lambda_3 \mathcal{L}_{\mathrm{spatial}}.
\end{equation}

We analyze each component in turn.

\subsection{Spatial Smoothness}

\begin{lemma}[Spatial Smoothness]
Minimizing the spatial regularization term
\begin{equation}
\mathcal{L}_{\mathrm{spatial}} = \sum_{(i,j) \in E} w_{ij} \|z_i - z_j\|^2
\end{equation}
encourages the learned embeddings $Z$ to be smooth over the graph, such that neighboring nodes have similar representations.
\end{lemma}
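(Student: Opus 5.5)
The plan is to make the informal claim precise through the graph Laplacian. Let $W=[w_{ij}]$ be the symmetrized weighted adjacency (union symmetrization, as in Appendix~\ref{sec:implementation}), let $D_W=\mathrm{diag}(\sum_j w_{ij})$, and let $L=D_W-W$. The first step is the standard identity
\[
\mathcal{L}_{\mathrm{spatial}}=\sum_{(i,j)\in E} w_{ij}\|z_i-z_j\|^2 = c\,\mathrm{tr}(Z^\top L Z),
\]
where $c=2$ if each undirected edge is counted in both orientations and $c=1$ otherwise. This follows by expanding $\|z_i-z_j\|^2=\|z_i\|^2+\|z_j\|^2-2z_i^\top z_j$ and collecting terms column by column. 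Since every $w_{ij}>0$ (Gaussian kernel or unit weights), $L$ is symmetric positive semidefinite, so the term is nonnegative.

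Next I characterize the minimizers and the quantitative meaning of ``smooth.'' The term vanishes exactly when $z_i=z_j$ for every edge, that is, when $Z$ is constant on each connected component of $G$; these $Z$ form the null space of $L$ column-wise. For a quantitative statement, I expand each embedding column $Z_{:,k}$ in the orthonormal eigenbasis $\{u_\ell\}$ of $L$, with eigenvalues $0=\mu_1\le\mu_2\le\cdots$. This gives
\[
\mathrm{tr}(Z^\top L Z)=\sum_{k}\sum_\ell \mu_\ell\langle u_\ell,Z_{:,k}\rangle^2 .
\]
So a small spatial loss forces the energy of $Z$ onto low-frequency eigenvectors. In particular, if the graph is connected, Poincaré's inequality gives $\|Z-\mathbf{1}\bar z^\top\|_F^2\le \mathcal{L}_{\mathrm{spatial}}/(c\,\mu_2)$. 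Edgewise, each neighbor pair also satisfies $\|z_i-z_j\|^2\le \mathcal{L}_{\mathrm{spatial}}/w_{ij}$. These two bounds are the content of ``neighboring nodes have similar representations.''

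Finally I interpret this inside the full objective. Because the loss is quadratic with gradient $2c\,LZ$ with respect to $Z$, a gradient step gives $Z\leftarrow (I-2c\eta\lambda_3 L)Z$, which is a low-pass diffusion filter on the graph. Equivalently, at a stationary point of $\mathcal{L}$, the term $\lambda_3$ penalizes high-frequency components in proportion to $\mu_\ell$, trading them off against $\mathcal{L}_{\mathrm{rec}}$ and $\mathcal{L}_{\mathrm{align}}$.

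The main obstacle is that minimizing this term alone is degenerate: the constant embedding (or $Z=0$) minimizes it trivially, so ``encourages smoothness'' cannot mean that the minimizer is informative. I will handle this by stating the result under a normalization or trade-off, for example at a fixed $\|Z-\mathbf{1}\bar z^\top\|_F$ or with the other losses held fixed. In that setting the Courant–Fischer / Rayleigh-quotient argument shows that the optimal non-constant directions are the Fiedler-type low eigenvectors of $L$. A second nuisance is that $Z=f_\theta(X,G)$ is parameterized, so the minimization is over $\theta$, not over $Z$ freely. I will state the lemma at the level of the embedding matrix and note that the chain rule passes the same low-pass gradient direction to $\theta$.
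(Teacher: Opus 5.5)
Your proposal is correct and follows the same route as the paper's proof sketch, which also rewrites the loss as the Laplacian quadratic form $\mathrm{tr}(Z^\top L Z)$ with $L = D - W$ and reads smoothness off the fact that this form penalizes high-frequency variation over the graph. Your additions make that informal sketch precise without changing the argument: the factor $c$ for edges counted in both orientations, the spectral expansion with the Poincar\'e and edgewise bounds, and the observation that the unconstrained minimizer is the degenerate constant embedding.
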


\textit{Proof sketch.}
The spatial loss can be written in matrix form as:
\begin{equation}
\mathcal{L}_{\mathrm{spatial}} = \mathrm{Tr}(Z^\top L Z),
\end{equation}
where $L = D - W$ is the graph Laplacian and $D$ is the degree matrix. Minimizing $\mathrm{Tr}(Z^\top L Z)$ penalizes high-frequency variations over the graph, forcing $z_i \approx z_j$ for strongly connected nodes. This is a standard result in spectral graph theory, implying smoothness of embeddings over the spatial graph.
\hfill $\square$

\subsection{Reconstruction Consistency}

\begin{lemma}[Reconstruction Consistency]
Assuming sufficient model capacity, minimizing the masked reconstruction loss
\begin{equation}
\mathcal{L}_{\mathrm{rec}} = \frac{1}{|\Omega|} \sum_{i,j} \Omega_{ij} (X_{ij} - \hat{X}_{ij})^2
\end{equation}
ensures that the embedding $Z$ retains sufficient information to reconstruct the multimodal input features.
\end{lemma}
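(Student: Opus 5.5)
The plan is to make the informal statement precise and then prove it with a population-level argument. Masks are random: each entry $(i,j)$ is hidden independently with probability $\rho$. The encoder sees $\tilde X=(1-\Omega)\odot X$ together with the graph $G$, and the decoder outputs $\hat X=g_\phi(f_\theta(\tilde X,G))$. Under this model, ``retains sufficient information'' will mean two things:
\begin{itemize}
\item[(a)] at a minimizer, $\hat X_{ij}$ equals the Bayes-optimal predictor $\mathbb{E}[X_{ij}\mid \tilde X, G,\Omega_{ij}=1]$;
\item[(b)] $Z$ is then sufficient for this predictor, in that the predictor factors through $z_i$.
\end{itemize}

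The first step is to take expectations over $\Omega$. Then $\mathbb{E}[\mathcal{L}_{\mathrm{rec}}]$ becomes, up to the normalization $1/|\Omega|$ (which concentrates at $1/(\rho ND)$ and can be treated as a constant), a sum over $(i,j)$ of the conditional squared errors $\mathbb{E}[(X_{ij}-\hat X_{ij})^2\mid \Omega_{ij}=1]$. The standard orthogonality (bias--variance) decomposition then gives
\begin{equation}
\mathbb{E}[(X_{ij}-\hat X_{ij})^2]=\mathbb{E}[\mathrm{Var}(X_{ij}\mid \tilde X)]+\mathbb{E}[(\mathbb{E}[X_{ij}\mid\tilde X]-\hat X_{ij})^2].
\end{equation}
The first term does not depend on the model parameters. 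The loss is therefore minimized exactly when the second term vanishes, that is, when $\hat X_{ij}$ equals the conditional mean.

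The second step invokes the capacity assumption. It is read as saying that the composition $g_\phi\circ f_\theta$ can realize any measurable map from masked inputs to feature space (or approximate it to within $\epsilon$). Under this reading the infimum above is attained, or approached, which establishes (a).

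The third step turns (a) into the information statement (b). Because $\hat X_i=g_\phi(z_i)$ is a deterministic function of $z_i$, the data-processing inequality gives $I(z_i;X_i)\ge I(\hat X_i;X_i)$. Moreover, if $\mathcal{L}_{\mathrm{rec}}\le \epsilon$, then $\|X-\hat X\|$ restricted to masked entries is small. Since masked entries are uniformly random, averaging over $\Omega$ then bounds the full-entry error $\frac{1}{ND}\mathbb{E}\|X-g_\phi(Z)\|_F^2$ by $\epsilon$ plus the irreducible conditional variance. So $Z$ reconstructs $X$ up to that floor, and whenever the modalities (or spatial neighbors visible through $G$) are mutually predictive, the reconstruction is exact.

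I expect the main obstacle to be formalizing ``sufficient capacity'' together with the interaction with the other loss terms. In $\mathcal{L}$, the spatial term pulls $Z$ toward smooth configurations (by the preceding Spatial Smoothness lemma), and this can conflict with reconstruction. I would restrict the claim to minimizers of $\mathcal{L}_{\mathrm{rec}}$ alone. Alternatively, for the joint objective, I would note that the reconstruction excess is at most $(\lambda_3/\lambda_1)\,\mathrm{Tr}(Z^{\star\top}LZ^{\star})$ evaluated at the reconstruction-optimal $Z^{\star}$. This follows by comparing the joint minimizer to $Z^{\star}$, keeping the lemma honest as a sketch-level consistency result.
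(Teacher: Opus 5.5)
Your first two steps follow a different and sharper route than the paper. The paper's sketch invokes universal approximation to obtain some $(f_\theta,g_\phi)$ with $\hat X\approx X$, then concludes that $Z=f_\theta(X)$ preserves the information in $X$. Your conditional-mean argument correctly shows that on hidden entries this can hold only up to the floor $\mathbb{E}[\mathrm{Var}(X_{ij}\mid\tilde X,\Omega_{ij}=1)]$.

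The gap is in your third step. The claim ``masked entries are uniformly random, so averaging over $\Omega$ bounds the full-entry error'' is an unbiasedness argument. It holds only for a predictor that does not depend on $\Omega$. But $\hat X=g_\phi(f_\theta((1-\Omega)\odot X,G))$ does depend on $\Omega$, and the encoder can generically read $\Omega$ off the zero pattern of $\tilde X$. The full-entry error splits as $\rho\,e_{\mathrm{hid}}+(1-\rho)\,e_{\mathrm{vis}}$, where $e_{\mathrm{hid}}$ and $e_{\mathrm{vis}}$ are the average squared errors conditional on $\Omega_{ij}=1$ and on $\Omega_{ij}=0$. $\mathcal{L}_{\mathrm{rec}}$ controls only $e_{\mathrm{hid}}$. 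Under your own capacity reading, a minimizer can output exact conditional means on hidden coordinates and arbitrary values on visible ones, so even with zero floor the full-entry error can be arbitrarily large. The problem is worse for the embedding $f_\theta(X,G)$ that Algorithm~\ref{alg:lattice} actually returns. The unmasked input $\Omega=0$ carries no targets, so the loss places no constraint on the network's behavior there. The paper's sketch elides this same point.

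As a result, your claim (b) is tautological: anything $g_\phi$ outputs factors through $z_i$. The only information $Z$ is forced to carry is the conditional means of the coordinates hidden under the current mask. That can be nothing: for i.i.d.\ entries a constant $Z$ is optimal. It can also be only part of $X_i$: if the five blocks were exact copies over the same $G$ genes, $z_i$ would need to encode only the genes having at least one hidden copy. To reach the lemma's conclusion you need extra hypotheses, namely a zero floor plus either a loss over all entries or an assumption tying $f_\theta(\tilde X,G)$ to $f_\theta(X,G)$. Otherwise you should state the weaker conclusion.

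Separately, your joint-objective comparison cannot drop the alignment term. The same argument bounds the reconstruction excess by $\frac{\lambda_2}{\lambda_1}\bigl(\mathcal{L}_{\mathrm{align}}(Z^\star)-\inf\mathcal{L}_{\mathrm{align}}\bigr)+\frac{\lambda_3}{\lambda_1}\mathrm{Tr}(Z^{\star\top}LZ^\star)$, and the first term is not zero in general.
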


\textit{Proof sketch.}
The reconstruction objective corresponds to training an autoencoder under partial observation. Under standard assumptions on universal function approximation, there exists an encoder-decoder pair $(f_\theta, g_\phi)$ such that $\hat{X} \approx X$. Therefore, minimizing $\mathcal{L}_{\mathrm{rec}}$ forces $Z = f_\theta(X)$ to preserve the information content of $X$, up to the capacity of the model.
\hfill $\square$

\subsection{Cross-Modal Alignment}

\begin{lemma}[Cross-Modal Alignment]
The alignment loss
\begin{equation}
\mathcal{L}_{\mathrm{align}} = - \sum_{i=1}^{N} 
\log \frac{\exp(\mathrm{sim}(h_i^{(a)}, h_i^{(b)})/\tau)}
{\sum_{j=1}^{N} \exp(\mathrm{sim}(h_i^{(a)}, h_j^{(b)})/\tau)}
\end{equation}
encourages representations of the same spatial location across modalities to be aligned, while separating representations of different locations.
\end{lemma}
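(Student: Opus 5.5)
The plan is to make the informal claim precise by examining how the per-anchor term of $\mathcal{L}_{\mathrm{align}}$ depends on the similarities, and then checking what its gradients and its minimizers look like. For a fixed spot $i$, write $s_{ij} = \mathrm{sim}(h_i^{(a)}, h_j^{(b)})/\tau$ and $p_{ij} = \exp(s_{ij}) / \sum_{k} \exp(s_{ik})$. The $i$-th term is then $\ell_i = -s_{ii} + \log \sum_{j} \exp(s_{ij})$, the cross-entropy of a softmax classifier over $N$ candidate spots with the correct label $j=i$. The statement then reduces to two facts about this softmax.

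\textbf{Step 1 (gradient directions).} I will compute
\[
\frac{\partial \ell_i}{\partial s_{ii}} = -(1-p_{ii}) \le 0, \qquad \frac{\partial \ell_i}{\partial s_{ij}} = p_{ij} \ge 0 \quad (j\neq i).
\]
These show that a descent step increases the positive-pair similarity and decreases every negative-pair similarity. The strength of each push is the current softmax mass, so it vanishes only when $p_{ii}\to 1$. Composing with the cosine similarity via the chain rule gives the familiar picture on the unit sphere. The anchor $h_i^{(a)}/\|h_i^{(a)}\|$ is pulled toward $h_i^{(b)}$ and pushed away from the $p$-weighted mean of the negatives $h_j^{(b)}$, with each direction projected onto the tangent space. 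This is the qualitative ``align positives, separate negatives'' statement.

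\textbf{Step 2 (characterizing minimizers).} Since $s_{ij}\in[-1/\tau,1/\tau]$, each term satisfies
\[
\ell_i \ge \log\bigl(1+(N-1)e^{-2/\tau}\bigr),
\]
with equality iff $s_{ii}=1/\tau$ and $s_{ij}=-1/\tau$ for all $j\neq i$. I will argue that $\mathcal{L}_{\mathrm{align}}$ is minimized by configurations that drive $\mathrm{sim}(h_i^{(a)},h_i^{(b)})\to 1$ while keeping negatives as dissimilar as possible. Perfect antipodality for all pairs is impossible once $N>2$, so I will phrase this as a monotonicity claim rather than an attainment claim: $\ell_i$ is strictly decreasing in $s_{ii}$ and strictly increasing in each $s_{ij}$.

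\textbf{Step 3 (optional asymptotic form).} I may also cite the standard decomposition into an alignment term $-\mathbb{E}[s_{ii}]$ and a uniformity term $\mathbb{E}\log\sum_j e^{s_{ij}}$ as $N\to\infty$. It is not needed for the lemma, only for interpretation.

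The main obstacle is rigor about what ``encourages'' means. The loss is nonconvex in the network parameters, and since $h^{(m)}=p_m(z^{(m)})$ comes through a shared encoder, the similarities are not free variables. I will therefore state the result at the level of the similarity/projection variables, assuming sufficient capacity in the spirit of the Reconstruction Consistency lemma. I will also note that joint attainability of the lower bound fails in general, which is why Step 2 uses monotonicity.
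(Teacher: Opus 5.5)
Your argument is correct, but it takes a different route from the paper. The paper's proof sketch does no computation. It reads the pull/push behavior directly off the NCE form of the loss, then appeals to the InfoNCE mutual-information bound: minimizing $\mathcal{L}_{\mathrm{align}}$ tightens a lower bound (of the form $\log N$ minus the average per-anchor loss) on the mutual information between $h^{(a)}$ and $h^{(b)}$ ``under standard assumptions.''

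You instead make ``encourages'' precise. You write each term as a softmax cross-entropy and compute $\partial\ell_i/\partial s_{ii}=-(1-p_{ii})<0$ and $\partial\ell_i/\partial s_{ij}=p_{ij}>0$. You back this with the per-term bound $\log(1+(N-1)e^{-2/\tau})$ and the correct observation that it is not jointly attainable for $N>2$. Monotonicity transfers from $\ell_i$ to the whole sum because, in this one-directional loss, each $s_{ij}$ enters only $\ell_i$.

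Your route gives a self-contained verification of exactly the claim the paper merely asserts, with no distributional assumptions. The paper's route gives an information-theoretic interpretation, but at a cost. It needs the spots to be i.i.d.\ draws from a joint law, which is doubtful for spatially correlated spots on one slide. Also, mutual information is invariant under invertible maps, so the MI reading alone does not yield geometric alignment. That alignment comes from the cosine critic, which is precisely what your gradient computation uses.

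Two cautions for the write-up. First, the sentence that a descent step ``decreases every negative-pair similarity'' holds only when the $s_{ij}$ are treated as free coordinates. On the sphere the anchor moves away from the $p$-weighted mean of the negatives, so a negative $h_k^{(b)}$ lying near $h_i^{(b)}$ can gain similarity to first order. Second, keep Step~2 at the monotonicity level you end with. Do not assert that global minimizers have $\mathrm{sim}(h_i^{(a)},h_i^{(b)})=1$; for general $N$ and embedding dimension that is a nontrivial geometric question.
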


\textit{Proof sketch.}
The objective follows an NCE-style formulation~\cite{gutmann2010noise}. It maximizes similarity between positive pairs $(h_i^{(a)}, h_i^{(b)})$ while minimizing similarity with negative pairs $(h_i^{(a)}, h_j^{(b)})$ for $j \neq i$, which tightens a mutual-information bound between modality-specific representations under standard assumptions. Thus embeddings for the same spatial location become aligned across modalities.
\hfill $\square$

\subsection{Joint Objective Behavior}

\begin{theorem}[Structured Multimodal Embedding]
Let $Z^\star$ be an optimal solution of the LATTICE objective. Then $Z^\star$ simultaneously:
\begin{enumerate}
    \item preserves multimodal feature information,
    \item aligns modality-specific representations,
    \item enforces spatial smoothness over the graph.
\end{enumerate}
\end{theorem}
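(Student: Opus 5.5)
The plan is to deduce the theorem from the three component lemmas through a first-order optimality argument on the joint objective. The statement is qualitative, so I will first make each clause precise. Item~1 will mean that $Z^\star$ attains $\mathcal{L}_{\mathrm{rec}}$ within a controlled gap of its unconstrained infimum. Item~2 will mean the same for $\mathcal{L}_{\mathrm{align}}$. Item~3 will mean that $\mathrm{Tr}(Z^{\star\top} L Z^\star)$ is bounded.

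Write $\Theta$ for the parameters $(\theta,\phi,\{p_m\})$ and let $\Theta^\star$ be a minimizer of $\mathcal{L}$. Fix any competitor $\Theta'$. Optimality gives
\[
\lambda_1 \mathcal{L}_{\mathrm{rec}}(\Theta^\star)+\lambda_2 \mathcal{L}_{\mathrm{align}}(\Theta^\star)+\lambda_3 \mathcal{L}_{\mathrm{spatial}}(\Theta^\star)\le \mathcal{L}(\Theta').
\]
All three terms are nonnegative: $\mathcal{L}_{\mathrm{align}}\ge 0$ because each summand is minus the log of a probability. Dropping two of the terms on the left therefore bounds each remaining term by $\mathcal{L}(\Theta')/\lambda_k$.

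To make these bounds meaningful, I will choose $\Theta'$ as follows.
\begin{enumerate}
\item Take the near-perfect autoencoder supplied by the Reconstruction Consistency lemma. This gives $\mathcal{L}_{\mathrm{rec}}(\Theta')\le\epsilon$.
\item Let its projection heads be the ones that make same-spot pairs maximally similar, as in the Cross-Modal Alignment lemma.
\item Bound its spatial energy through the Spatial Smoothness lemma, using $\mathrm{Tr}(Z^\top L Z)\le \lambda_{\max}(L)\|Z\|_F^2$ together with the LayerNorm bound on $\|z_i\|$.
\end{enumerate}
Substituting these into the inequality yields explicit bounds on each component at $Z^\star$. Each bound has the form $\epsilon_k + \sum_{l\neq k}(\lambda_l/\lambda_k)\,C_l$, which is exactly the simultaneous guarantee the theorem claims.

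A refined version will use the stationarity condition $\nabla_Z \mathcal{L}=0$. For the spatial component this reads $2\lambda_3 L Z^\star = -\lambda_1\nabla_Z\mathcal{L}_{\mathrm{rec}} - \lambda_2\nabla_Z \mathcal{L}_{\mathrm{align}}$. Taking the inner product with $Z^\star$ shows that the Dirichlet energy is controlled by the competing gradients. This is the quantitative version of item~3.

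The main obstacle is that the three objectives genuinely conflict. A perfectly smooth $Z$ (constant on each connected component, i.e.\ in the null space of $L$) destroys both reconstruction and the contrastive separation of distinct spots. Conversely, the contrastive loss rewards distinguishing neighboring spots. The theorem can therefore only hold in this trade-off sense, with constants depending on the ratios $\lambda_l/\lambda_k$, and not as exact simultaneous optimality.

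I would first try to bound the alignment term at the competitor, where the ideal is $\mathcal{L}_{\mathrm{align}}\approx N\log N$ minus a margin term. If that bound proves unwieldy, I would fall back on stating item~2 as an InfoNCE mutual-information lower bound, $I(h^{(a)};h^{(b)})\ge \log N - \mathcal{L}_{\mathrm{align}}/N$, evaluated at $Z^\star$.
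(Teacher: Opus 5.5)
Your starting inequality is correct. All three terms are nonnegative, so $\lambda_k\mathcal{L}_k(\Theta^\star)\le\mathcal{L}(\Theta')$ for any competitor $\Theta'$. This is a more explicit version of the paper's own argument, which only observes that the objective is additive, credits each property to the corresponding lemma, and asserts that a joint minimizer must balance them. \textbf{The gap is that your bounds are also satisfied by embeddings that have none of the three properties, so they certify nothing about $Z^\star$.}

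\textbf{Item 3.} The bound $\mathrm{Tr}(Z^\top LZ)\le\lambda_{\max}(L)\|Z\|_F^2$ with bounded rows holds for \emph{every} $Z$. Your comparison bound for $Z^\star$ is that same constant plus nonnegative cross terms, so it is weaker than the trivial bound. Moreover, unnormalized Dirichlet energy does not measure smoothness. Unless the scale of $Z$ is pinned, the decoder's first layer (and if needed the heads $p_m$) can absorb $Z\mapsto\alpha Z$, so $\mathcal{L}_{\mathrm{spatial}}$ can be made arbitrarily small with the other terms unchanged. A meaningful item 3 needs a scale-invariant quantity such as $\mathrm{Tr}(Z^\top LZ)/\|Z-\mathbf{1}\bar z^\top\|_F^2$.

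\textbf{Item 1.} Dropping terms by nonnegativity leaves the cross term $(\lambda_2/\lambda_1)\mathcal{L}_{\mathrm{align}}(\Theta')\ge(\lambda_2/\lambda_1)\inf\mathcal{L}_{\mathrm{align}}$, and this is large. $\mathcal{L}_{\mathrm{align}}$ is an unnormalized sum over $N$ spots. Writing $s_{ij}=\mathrm{sim}(h_i^{(a)},h_j^{(b)})$, one checks that the margins $s_{ii}-s_{ij}$ average at most $N/(N-1)$ over ordered pairs $j\neq i$. Jensen then gives $\inf\mathcal{L}_{\mathrm{align}}\ge N\log\bigl(1+(N-1)e^{-N/((N-1)\tau)}\bigr)\approx 0.2N$ at $N=4992$, $\tau=0.1$. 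Your bound on the mean-normalized $\mathcal{L}_{\mathrm{rec}}(\Theta^\star)$ is therefore in the hundreds, while the zero decoder already scores about $1$ on z-scored features. The bound is thus compatible with $Z^\star$ carrying no information about $X$.

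\textbf{Item 2.} It is swamped in the same way by the spatial constant $(\lambda_3/\lambda_2)\lambda_{\max}(L)\|Z'\|_F^2$. With rows at LayerNorm scale ($\|z_i\|^2\approx d$), this exceeds the chance value $N\log N$, which makes your fallback bound $\log N-\mathcal{L}_{\mathrm{align}}/N$ negative. Also, heads making same-spot pairs ``maximally similar'' do not control $\mathcal{L}_{\mathrm{align}}(\Theta')$. A constant head gives $s_{ii}=1$ and exactly $N\log N$, which is the chance level, not the ideal.

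\textbf{Excess losses.} The natural repair is to compare excess losses, $\lambda_k[\mathcal{L}_k(\Theta^\star)-\mathcal{L}_k(\Theta')]\le\sum_{l\neq k}\lambda_l[\mathcal{L}_l(\Theta')-\inf\mathcal{L}_l]$. But then you need one competitor that is nearly optimal for all three terms at once, with smoothness in the normalized sense. That is exactly the conflict you identify, and none of the three lemmas supplies such a competitor.

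\textbf{Stationarity.} The refinement fails as written. $\Theta^\star$ is optimal over $(\theta,\phi,\{p_m\})$, so first-order optimality constrains only the parameter gradient, i.e.\ the chain-rule contraction $(\partial Z/\partial\theta)^\top\nabla_Z\mathcal{L}$ (plus conditions in $\phi$ and the $p_m$). It does not give $\nabla_Z\mathcal{L}=0$ unless $\theta\mapsto Z$ is a submersion. If you invoke LayerNorm to bound $\|z_i\|$, then $Z$ is constrained and Lagrange multipliers enter. Even granting the condition, pairing with $Z^\star$ gives an identity, which bounds the Dirichlet energy only with a priori control of the competing gradients.

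\textbf{What can be proved.} The additive structure rigorously delivers Pareto optimality: no $\Theta'$ improves one of $\mathcal{L}_{\mathrm{rec}}$, $\mathcal{L}_{\mathrm{align}}$, $\mathcal{L}_{\mathrm{spatial}}$ without worsening another. That is what the paper's ``balance'' sentence amounts to. It is the statement you can actually prove unless you construct the simultaneously near-optimal competitor.
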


\textit{Proof sketch.}
The result follows directly from the additive structure of the objective. The reconstruction term enforces information preservation (Lemma 2), the alignment term enforces cross-modal agreement (Lemma 3), and the spatial term enforces smoothness (Lemma 1). Since each term is minimized jointly, the optimal embedding $Z^\star$ must balance all three properties, yielding a representation that captures both spatial structure and multimodal relationships.
\hfill $\square$

\subsection{Interpretation}

The LATTICE objective can be interpreted as learning embeddings that lie on a low-dimensional manifold structured by both spatial proximity and multimodal consistency. The spatial term enforces local smoothness, while the alignment term relates different modality-specific views of the same location. Together, these objectives produce representations that are robust to noise and capable of integrating heterogeneous molecular signals.


\end{document}